\documentclass{article}

\usepackage{graphicx}
\usepackage{booktabs}
\usepackage{hyperref}
\usepackage{url}
\usepackage{amsfonts}
\usepackage{xcolor}
\usepackage{amsmath}
\usepackage{amssymb}
\usepackage{amsthm}
\usepackage[shortlabels]{enumitem}
\usepackage{algorithm}
\usepackage{algorithmic}

\usepackage[preprint]{icml2026}

\usepackage[capitalize,noabbrev]{cleveref}

\theoremstyle{plain}
\newtheorem{theorem}{Theorem}[section]
\newtheorem{proposition}[theorem]{Proposition}
\newtheorem{lemma}[theorem]{Lemma}

\theoremstyle{definition}

\theoremstyle{remark}
\newtheorem{remark}[theorem]{Remark}

\icmltitlerunning{Ricci-Filtration for RAG Reranking}

\begin{document}

\twocolumn[
  \icmltitle{Ricci-Filtration:  Boosting Retrieval-Augmented Generation Reranker to Query-Answer Tasks by Discrete Ricci Flow}



  \icmlsetsymbol{equal}{*}

  \begin{icmlauthorlist}
    \icmlauthor{Tian Qin}{xxx}
    \icmlauthor{Wei-Min Huang}{yyy}

  \end{icmlauthorlist}

  \icmlaffiliation{xxx}{Math department, Lehigh University}
    \icmlaffiliation{yyy}{Math department, Lehigh University}

  \icmlcorrespondingauthor{Tian Qin}{tiq218@lehigh.edu}
  \icmlcorrespondingauthor{Wei-Min Huang}{wh02@lehigh.edu}

  \icmlkeywords{Machine Learning, ICML}

  \vskip 0.3in
]



\printAffiliationsAndNotice{}  

\begin{abstract}
Ricci flow \cite{ricciflow} is a curvature-guided diffusion process that deforms space by shrinking regions of high positive curvature and expanding those with negative curvature. Similarly, discrete Ricci flow on weighted graphs \cite{Ni2019CommunityDO} modifies edge weights by shrinking edges with positive Ricci curvature and stretching those with negative Ricci curvature, effectively increasing the separation between clusters. Inspired by these two cornerstone works, 
we propose a geometry-based RAG reranker enhancement procedure called Ricci-Filtration. By modeling the input query and initial retrieved chunks as a network, where the input query and chunks serve as nodes and embedding-based pairwise relations define an initial graph, Ricci-Filtration leverages discrete curvature and Ricci flow to evaluate the structural importance of each chunk with respect to the user query. The system first filters the initial chunks based on their geometric curvature relative to the query; then, a reranker processes the remaining chunks to enhance generative performance. We theoretically prove that normalized discrete Ricci flow can detect community structures by identifying distinct asymptotic behaviors in edge weights. This supports the removal of ``noisy'' document chunks characterized by large weights and negative Ricci curvature relative to the query node. Extensive experiments confirm that Ricci-Filtration outperforms several baseline reranking methods in accuracy, precision, recall, and F1 scores. Furthermore, ablation studies demonstrate that the Ricci-Filtration generally outperforms the baseline under various settings, highlighting the framework's robustness across different architectures.

\end{abstract}

\section{Introduction}
\begin{figure*}[h]
  \includegraphics[width=\linewidth]{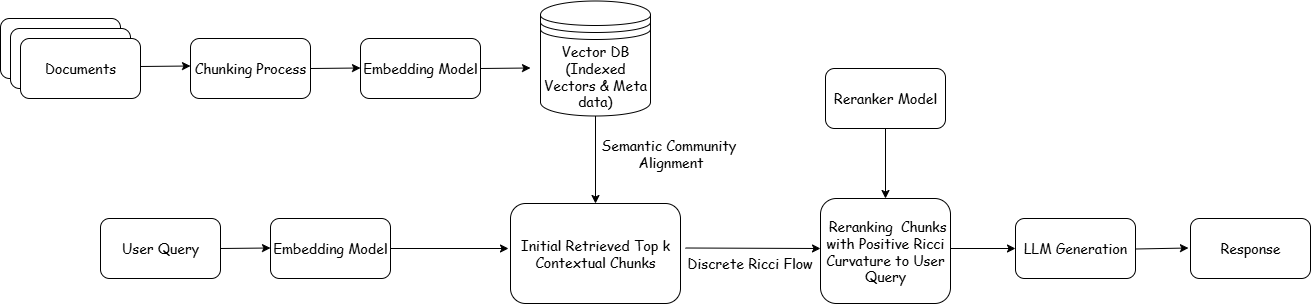}
  \caption{Overview of the Ricci-Filtration framework. Ricci-Filtration is designed to better retrieve relevant information
from stored knowledge by employing discrete Ricci flow on graphs formed by the query and chunks, which eventually leads to a dynamic number of selected chunks and superior generation quality.}
  \label{fig:ricci-rag}
\end{figure*}

Retrieval-augmented generation (RAG) \cite{RAG} is a popular method for using LLMs to answer queries based on data that is too large for the context window of a language model, which means the maximum number of tokens that can be processed by the LLM
at once \cite{liu-etal-2024-lost}. A typical RAG system is designed to retrieve a few records that are specifically relevant to the user's query and collectively small enough to fit within the language model's context window \cite{Baumel2018QueryFA,QALaskar,QAYao}. This process allows the LLM to generate a response informed by specific, external information rather than just its pre-trained knowledge. However, the initial retrieval process may contain many irrelevant chunks, especially in Query-Answer (QA) tasks, which distracts the LLM and leads to hallucinations. More specifically, most RAG systems utilize vector search to select the most related chunks from the knowledge base, while vector search can be easily fooled by keyword overlap or shallow semantic similarity. One solution is to employ rerankers as fine-grained filters to not only move the most relevant ``gold'' chunks to the top, but also filter out ``noisy'' or irrelevant documents.

 The  paradigm described above was first proposed by \citet{glass-etal-2022-re2g}, where Cross-Encoders are employed as rerankers to perform a deep token-level comparison between the query and the initial selected chunks. Unlike bi-encoders where the query and document are mapped into a shared vector space independently, cross-encoders treat the query and document as a single combined input sequence by performing full self-attention across every token in both query and document simultaneously. Examples include transformer-based encoders like BGE-Reranker \cite{bge_embedding} and Sentence-BERT \cite{reimers2019sentencebert}. Since we have to input the query and the document into the model together, the latency can be high. Additionally, most cross-encoders are based on BERT-style architectures, which need to be fine-tuned and have token limits, making them less general in practice. There are also works that use LLMs as rerankers. \citet{sun2023is} demonstrated that LLMs, when prompted to rank documents, significantly outperform smaller cross-encoders. However, the price would be the relatively high computational cost. On the other hand, the current reranker paradigm requires a fixed parameter $k$ to select reranked chunks. Such a one-size-fits-all setting may make the reranker less flexible to the variable relationships between query and text chunks. For instance, if $k$ is too large, the selected reranked chunks can still contain noisy information degrading the quality of the LLM's generation.

To address these issues, we introduce Ricci-Filtration, a novel  RAG enhancement framework featuring a geometric augmented reranking process based on the  discrete Ricci flow \cite{Ni2019CommunityDO}.  From the geometric point of view,  the discrete Ricci flow defined on weighted graphs and deforms edges weights as time progresses. As a result, edges of large positive Ricci curvature will shrink and edges of negative Ricci curvature will be stretched, i.e., separated from each other. Unlike traditional rerankers, Ricci-Filtration relies exclusively on precomputed embeddings and the intrinsic geometric relationships between query and text chunks. By modeling the input query and initial retrieved chunks as a graph (network), where the input query and chunks serve as nodes and embedding-based pairwise relations define the initial graph topology, Ricci-Filtration utilizes discrete curvature and Ricci flow to characterize the structural importance of each node. Specifically, the initial retrieved chunks are first filtered by their geometric curvatures between query, then a reranker is employed to boost the performance of generation.  This approach is LLM-agnostic during the filtration step, since the filtering decision uses the initial embeddings and graph geometry rather than an additional LLM call. Its advantage should therefore be understood as a different accuracy-latency trade-off, not as a uniform cost advantage over cross-encoders or LLM-based reranking. Figure \ref{fig:ricci-rag} illustrates the  framework of Ricci-Filtration. 

The main contributions of our work can be summarized as following:
\begin{enumerate} 
    \item We first model the initial information retrieval from the graph and geometry point of view. We then employ the discrete Ricci flow algorithm to filter the irrelevant information. Note that the number of filtered chunks used in reranker later is now dynamic as the filtration process  totally relies on the intrinsic Ricci curvature between query and text chunks.  To our best of knowledge, this is the first attempt to connect RAG with Ricci flow, a famous curvature guided diffusion process introduced by \citet{ricciflow}.
    \item  To run the discrete Ricci flow on graphs, we construct an initial edge-incidence graph from cosine dissimilarities between the embeddings of the query and the initially retrieved text chunks. Present edges are then assigned positive initial lengths before the flow starts. \citet{Ni2019CommunityDO} developed a python package running discrete Ricci flow on graphs smoothly. Thus the whole implementation of Ricci-Filtration is straightforward and easy to be integrated into existing pipelines.
    \item We give a stylized theoretical analysis showing edge-type separation under normalized Ricci flow. The uniform-neighborhood case $\alpha=0,p=0$ admits a closed-form proof, while the practical choice $\alpha=1/2,p=2$ admits a weaker finite-time separation guarantee on the same symmetric graph family. These results support the filtering mechanism but do not constitute a guarantee for arbitrary embedding-derived retrieval graphs. 
    \item 
    The experiments demonstrate that Ricci-Filtration is superior to many baseline reranker methods on many QA tasks in the sense of accuracy, precision, recall and F1 scores. The ablation studies also show that Ricci-Filtration generally outperforms the baseline under various comparable settings, highlighting the framework's robustness across different architectures.
\end{enumerate}

 The remaining sections of this paper are organized as following schema. In section \ref{preliminary}, we provide some preliminaries  involving concepts of reranker and discrete Ricci flow. In section \ref{Ricci-RAG}, we illustrate the architecture of Ricci-Filtration. The comparisons between RAG with Ricci-Filtration and RAG with other types of rerankers on benchmark datasets are provided  in section \ref{Experiments}. Ablation studies are provided in section \ref{Experiments} as well. In the end,  section \ref{discussion} discusses few characteristics and limitations of Ricci-Filtration and suggests some future directions. The code and instructions needed to reproduce the main experimental results are included in supplementary material.

\section{Preliminary}
\label{preliminary}

\subsection{Reranker in RAG}
 The RAG system consists mainly of two components: Retrieval and Generation. The retrieval process extracts relevant data from external sources via indexing and searching. Indexing organizes documents using inverted indexes for sparse retrieval or dense
vector encoding for dense retrieval  \cite{Gao2023RetrievalAugmentedGF,Khattab2020ColBERTEA,Douze2024TheFL,Yu2024EvaluationOR} to ensure efficiency. The searching phase then leverages these indexes to fetch documents based on user queries, often applying rerankers \cite{Lyu2024CRUDRAGAC,Tang2024MultiHopRAGBR} such as a powerful cross-encoder to refine the ranking of results for better relevance to the user query. The generation component utilizes the retrieved content and user query to formulate coherent
and contextually relevant responses with the prompting and inference phases. LLMs are the preferred performance standard in generation and their dominance is attributed to their ``Emerging'' ability \cite{Wei2022EmergentAO} and recent breakthroughs in following complex human commands  \cite{Ouyang2022TrainingLM}. In this paper, we will focus on improving the reranking step which is a crucial second-stage   in the retrieval process. In general, a reranker acts as a high-precision filter that double-checks the documents retrieved by our initial search before sending them to the generation model. The common choices of rerankers involve cross-encoders and LLM-based rerankers. Cross-encoders process the query and the document simultaneously. Because they can perform ``full attention'' across both texts, they are highly accurate but computationally expensive \cite{Reimers2019SentenceBERTSE,Devlin2019BERTPO}. On the other hand, LLM-based Rerankers use full LLM (like GPT-4o or Gemini) to rank retrieved documents. For example, we can prompt the LLM with the query and a list of docs, asking it to identify which are most relevant \cite{LLMreranker,Yu2024RankRAGUC}. However, the latency and cost will be extremely high. 


\subsection{Discrete Ricci Flow}
\label{discrete ricci flow}

 Drawing on the geometric foundations established by Gauss and Riemann over 150 years ago, the Ricci flow utilizes curvature to provide a quantitative measure of how a space bends at any given point. In  classical geometry, this curvature dictates the distribution of space: areas with high positive curvature are more ``densely packed'', whereas regions of negative curvature tend to spread out or expand. To locate these regions of large curvature, \citet{ricciflow} introduced a curvature guided diffusion process, called the Ricci flow, that deforms the space in a way formally analogous to the diffusion of heat. Under the Ricci flow, regions in a space of large positive curvature shrink to points whereas regions of very negative curvature spread out. See Appendix \ref{the ricci flow} for details. By considering a network (graph) as a discrete counterpart of a manifold and connected sum components as communities, \citet{Ni2019CommunityDO} introduced a discrete Ricci flow on networks for identifying communities in a network. Building on these foundations, we explore the application of discrete Ricci flow to the graphs established between user queries and retrieved document chunks in RAG.

To better illustrate the idea of discrete Ricci flow on weighted graphs, we follow the notations in \citet{Ni2019CommunityDO} and  start with the definition of discrete (Ollivier) \footnote{While the literature also includes Forman-Ricci curvature \cite{Sreejith2016FormanCF}, which easier and faster to compute in large-scale networks, it lacks the geometric depth of Ollivier-Ricci curvature. Consequently, this work focuses exclusively on Ollivier-Ricci curvature, which we refer to simply as ``Ricci curvature'' throughout the remainder of the paper for brevity.} Ricci curvature \cite{OLLIVIER2007643}. Given a metric space $(X,d)$ equipped with a probability measure $m_{x}$ for each $x\in X$, for a given graph $G = (V,E)$ based on space $X$, let the edge weight of edge $xy \in E$ be $w_{xy}$, the discrete Ricci curvature of edge $xy$, $\kappa_{xy}$, is
computed as follows:
\begin{equation}
    \label{riccicurvature}
    \kappa_{xy}=1-\frac{\text{W}(m_{x}^{\alpha,p},m_{y}^{\alpha,p})}{d(x,y)}\,,
\end{equation}
where W represents the optimal mass transport distance (a.k.a. Wasserstein Distance). The mass distribution $m_{x}^{\alpha,p}$ is defined as
\begin{equation}
    \label{mass dist}
    m_{x}^{\alpha,p}=\begin{cases} 
      \alpha & i=x \\
      \frac{1-\alpha}{C}\cdot \text{exp}(-d(x,i)^{p}) & i \sim x \\
      0 & \text{Otherwise}\,,
   \end{cases}
\end{equation}
where $C = \sum_{j\sim x} \text{exp}(-(d(x, j))^{p})$, $d(x,i)$ is  shortest path taken over all edge paths from $x$ to $i$, which is the induced metric from the edge weight. The notation $i \sim x$ represents node $i$ is the neighborhood of node $x$. More specifically, given a weighted graph $(V,E,w)$ with $w_{ij}>0$ for all edges, the induced metric $d$ for edge $xy$ is defined by 
\begin{equation}
    \label{metric}
    \begin{split}
            d(x,y)=\min_{\gamma:x=v_0\leadsto v_m=y}
            \sum_{r=0}^{m-1}w_{v_{r}v_{r+1}}\,,
    \end{split}
\end{equation}
where the minimum is taken over all edge paths $\gamma$ from $x$ to $y$. It can be shown that the induced metric $d$ is well-defined. 

In addition, the Wasserstein distance in \eqref{riccicurvature} is defined as the minimum total weighted travel distance to move $ m_{x}^{\alpha,p}$ to $ m_{y}^{\alpha,p}$, i.e., $W( m_{x}^{\alpha,p}, m_{y}^{\alpha,p})=\text{inf}\{\sum_{u,v \in V}A(u,v)d(u,v)\}$ where $A$ is a discrete transport plan (map) from $V\times V$ to $[0,1]$ such that $A(u,v)$ is the amount of mass at vertex $v$ to be moved to vertex $u$ and $m_{x}^{\alpha,p}(u)=\sum_{v'\in V}A(u,v')$ and $m_{y}^{\alpha,p}(v)=\sum_{v'\in V}A(u',v)$. For efficiency, we will apply the optimal transportation distance function in python package CVXPY \cite{diamond2016cvxpy} to compute the Wasserstein distance between two mass distributions. See Appendix \ref{theoretical foundation} for more detailed explanations of theoretical concepts and optimal transport problem. Now the iteration process of discrete Ricci flow on network is defined as follows:
\begin{equation}
    \label{ricciflow}
    w_{xy}^{(i+1)}=(1-\varepsilon\kappa_{xy}^{(i)})w_{xy}^{(i)}\,,
\end{equation}
where $w_{xy}^{(i)}$ is the length of edge $xy$ at the $i$-th iteration, $0<\varepsilon\leq 1$ is the step size, and $\kappa_{xy}^{(i)}$ is the Ricci curvature in \eqref{riccicurvature} computed using the path metric $d^{(i)}$ induced by the current edge lengths $w^{(i)}$. See Algorithm \ref{algo:ricci_flow} in section \ref{Ricci-RAG}.

Note that the iteration \eqref{ricciflow} above will expand (stretch) edges with negative curvature and shrink edges with positive curvature. This process effectively causes intra-community edges (within-group connections) to condense and inter-community edges (between-group connections) to stretch. Because of this clear separation, a simple thresholding procedure can easily distinguish different communities. This is termed network ``surgery'' when edges of large weights (likely inter-community edges) are removed after several Ricci flow iterations.

\section{Ricci-Filtration}
\label{Ricci-RAG}

By viewing the group of chunked texts and user query  as a network, we propose Ricci-Filtration which further filters the initial selected chunks by their intrinsic geometric curvature with query. By definition \eqref{riccicurvature}, if two nodes $x$ and $y$ are from different communities (groups), their neighbor nodes tend to have fewer common neighbors, so the best way to move $m_{x}$ from $x's$ neighbors to $m_y$ in $y's$ neighbors has to travel along the edge $xy$ in most cases. As a result, the Wasserstein distance should be greater than the length of $xy$ ($d(x,y)$, leading to a negative Ricci curvature. Alternatively, nodes that are geometrically close or within the same community tend to share neighbors or have a shortcut between neighbors, thus the Wasserstein distance should not be greater than $d(x,y)$, leading to a positive Ricci curvature. 

In the context of RAG, after the evolution of discrete Ricci flow, we are able to see which text chunk will have positive Ricci curvature with the query in the  embedding space, which can be interpreted as intrinsic relevant chunks of query. Negative Ricci curvature indicates irrelevant chunk which will be removed before reranking. We may also get many byproducts such as the clusters of other text chunks after discrete Ricci flow. Since we only care about the relation between query and chunks, we ignore the byproducts like clustering between chunks in the current work. Exploring how to utilize the clusters formed purely from chunks is one of our future works.  Algorithm \ref{algo:ricci_flow} then shows how the discrete Ricci flow  is implemented in practice.
\begin{algorithm}
        \caption{Finite-Time Discrete Ricci Flow}
            \label{algo:ricci_flow}
\begin{algorithmic}[1]
    \REQUIRE An undirected graph $G$ formed by chunked texts. The number of flow iterations $M$. The threshold value $\eta$ for ``surgery''. The step size $0<\varepsilon\leq 1$ of discrete Ricci flow iteration. The initial edge weights are set to 1 before the first normalization.

    \FOR{$i=0,\ldots,M-1$}
    \STATE Normalize the edge weights: $w^{(i)}_{xy} \leftarrow w^{(i)}_{xy} \cdot \frac{|E|}{\sum_{uv \in E}w^{(i)}_{uv}}$, where $|E|$ is the number of edges in the graph $G$.

     \STATE Compute the path metric $d^{(i)}$ induced by $w^{(i)}$ and the Ricci curvature $\kappa_{xy}^{(i)}$ of each edge.

     \STATE Update the edge weight by $w^{(i+1)}_{xy}\leftarrow (1-\varepsilon \kappa^{(i)}_{xy})w^{(i)}_{xy}$

     \ENDFOR
     \STATE For the query node $q$, keep chunk node $j$ only if the final normalized query-edge weight satisfies $w_{qj}\leq \eta$; otherwise remove it before reranking.
    \STATE  \textbf{Return:} The filtered chunk set and the finite-time weighted graph $G$. 
\end{algorithmic}
\end{algorithm}

The loop in Algorithm \ref{algo:ricci_flow} conducts a finite number of normalized discrete Ricci-flow iterations on a weighted graph. The surgery step partitions the query-connected candidate chunks by removing high-weight query edges after the fixed iteration budget is exhausted. In the end, for edges connected with query and other chunks, we obtain the corresponding finite-time Ricci curvatures and edge weights, which can be used to indicate inter-community edges and intra-community edges. The normalization procedure rescales the edge weights so that the average edge weight is 1 at each iteration; thus Algorithm \ref{algo:ricci_flow} is a finite-time normalized Ricci flow. To solidify the theoretical foundation of Ricci-Filtration, we prove Theorem \ref{ricci-flow-detectation}, which is a normalized adaption of \citet{Ni2019CommunityDO}, to show that the normalized discrete Ricci flow  can detect community structures for certain families of graphs. 

Related well-posedness results provide useful context but do not directly prove the exact practical algorithm used here. \citet{bai2025ollivierricciflowweightedgraphs} establish existence and uniqueness for a continuous-time normalized Ollivier Ricci-flow on weighted graphs, while \citet{ma2024modifiedricciflow} introduce modified and quasi-normalized Ricci flows on arbitrary weighted graphs with global existence and uniqueness. These results support the view that Ollivier-type Ricci-flow dynamics can be formulated rigorously on weighted graphs. However, Ricci-Filtration uses an embedding-derived edge-incidence graph, forced query edges, a finite number of discrete normalized iterations, and a heuristic post-flow surgery threshold $\eta=1$. We therefore cite these works as theoretical context rather than as convergence, optimality, or correctness guarantees for Algorithm \ref{algo:ricci_flow}.

\begin{theorem}
\label{ricci-flow-detectation}
Take the complete \footnote{A graph is complete if every distinct pair of vertices is connected by exactly one unique edge and it has no loops}  graph on $b + 1$ vertices $p_1, ..., p_{b + 1}$ and
$b + 1$ complete graphs $C_1, ..., C_{b + 1}$ on $a + 1$ vertices. Take a vertex $u_i$ from each $C_i$ and identify $u_i$ with $p_i$. The
resulting graph is $G(a, b)$. Then the normalized Ricci flow associated to the Ollivier $K_{0}$-Ricci curvature detects the community structure on $G(a, b)$ if $a > b \geq 2$, namely, the weight of the intra-community edges shrink asymptotically faster than the weight of the inter-community edges, where the Ollivier Ricci curvature $K_0$ corresponds to $\alpha=0, p=0$ in equation \eqref{riccicurvature}.
\end{theorem}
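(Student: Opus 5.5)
The plan is to use the symmetry of $G(a,b)$ to reduce the normalized flow to a low-dimensional dynamical system, and then show that one logarithmic ratio of weights grows at least linearly in time.

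\textbf{Step 1: symmetry reduction.} The automorphism group of $G(a,b)$ permutes the $b+1$ communities and, inside each $C_i$, permutes the $a$ non-hub vertices. Because the flow is defined intrinsically, every flow with symmetric initial data (all weights equal to $1$) remains symmetric. Hence at every time only three weights occur:
\begin{itemize}
\item $x$, on the intra-community edges of $C_i$ not incident to $p_i$;
\item $y$, on the intra-community edges of $C_i$ incident to $p_i$;
\item $z$, on the inter-community edges $p_ip_j$.
\end{itemize}
Write $\kappa_x,\kappa_y,\kappa_z$ for the corresponding $K_0$ curvatures. The normalized flow becomes
\[
\dot w=-\kappa_w w+\Big(\textstyle\sum_e \kappa_e w_e\big/\sum_e w_e\Big) w,\qquad w\in\{x,y,z\}.
\]
The normalization term is common to all edge types, so it cancels in ratios. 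For example,
\[
\tfrac{d}{dt}\log(z/x)=\kappa_x-\kappa_z,\qquad \tfrac{d}{dt}\log(z/y)=\kappa_y-\kappa_z.
\]
The discrete iteration of Algorithm \ref{algo:ricci_flow} behaves the same way: the normalization rescales every edge by the same factor. So it suffices to show that $\kappa_x-\kappa_z$ and $\kappa_y-\kappa_z$ stay bounded below by a positive constant along the flow. That gives exponential divergence of the intra/inter weight ratio.

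\textbf{Step 2: explicit curvatures in an invariant region.} With $\alpha=0,p=0$, the measure $m_v$ is uniform on the neighbours of $v$. I would compute each curvature explicitly, assuming the ordering $x\le y\le z$ and that the path metric agrees with the edge weights on edges, which holds when $y\le 2x$ and similar triangle conditions are satisfied.
\begin{itemize}
\item \emph{Edge $uv$ inside $C_i$ away from the hub.} Both endpoints share $a-1$ common neighbours, which need no transport. Only the masses at $u$ and $v$ are exchanged, at cost $x$. This gives
\[
\kappa_x=1-\frac{x/a}{x}=1-\frac1a .
\]
\item \emph{Edge $p_iu$ with $u\in C_i$.} Now $p_i$ has $a+b$ neighbours and $u$ has $a$. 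The mass $(b)/(a+b)$ sitting on the other hubs $p_j$ must travel to $C_i$ at cost about $z+y$. This makes $\kappa_y$ computable as an explicit rational function of $x,y,z$.
\item \emph{Bridge edge $p_ip_j$.} Both endpoints have degree $a+b$. The $b-1$ other hubs are common neighbours. The $a$ clique vertices of $C_i$ must be moved to those of $C_j$ at cost roughly $y+z+y$. This gives
\[
\kappa_z\approx 1-\frac{a(2y+z)+\cdots}{(a+b)z},
\]
which is negative, bounded away from $0$, once $a>b\ge2$.
\end{itemize}

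\textbf{Step 3: invariance and conclusion.} I would show that the region $\{x\le y\le z\}$ (together with the metric conditions) is forward invariant, using the sign of $\kappa_x-\kappa_y$ and $\kappa_y-\kappa_z$ on its boundary. Inside this region, the inequality $a>b\ge2$ should give
\[
\kappa_x-\kappa_z\ge c>0 \quad\text{and}\quad \kappa_y-\kappa_z\ge c>0
\]
for a constant $c$ depending only on $a,b$. Integrating the ratio equations from Step 1 then yields $z/x\ge e^{ct}$ and $z/y\ge e^{ct}$. Thus the intra-community weights shrink asymptotically faster than the inter-community ones, which is exactly the claimed detection property.

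\textbf{Main obstacle.} The hard part is Step 2 together with the invariance in Step 3. The optimal transport plan, and even the path metric, changes as the ratios $y/x$ and $z/y$ evolve. I would first try to prove that $y/x$ stays bounded, since $\kappa_x-\kappa_y$ should be small or have a favourable sign, while $z$ separates. Once that holds, an explicit feasible transport plan gives an upper bound on $W$ for the intra edges. For the bridge, a dual 1-Lipschitz potential (distance to $C_i$) gives a lower bound on $W$. These two bounds avoid computing the exact optimal plans and should suffice for the required curvature gap.
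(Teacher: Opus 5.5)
\textbf{The target in Step 3 is false for the gateway edges.} You want $\kappa_y-\kappa_z\ge c>0$, that is, $z/y\to\infty$. In fact the bridge and gateway weights stay comparable.

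Take a non-gateway vertex $u\in C_i$. Then $\kappa_y-\kappa_z=W_1/z-W_2/d(p_i,u)$. Your feasible plan gives $W_1\le\frac{(a-1)z+2ay}{a+b}$. The potential $d(\cdot,C_i)$ gives $W_2\ge\frac{b}{a+b}z$: the mass $\frac{b}{a+b}$ that $m_{p_i}$ puts on the other hubs must come from $C_i$, which carries all of $m_u$. Since $d(p_i,u)\le y$,
\[
\kappa_y-\kappa_z\le\frac{a-1+2a\,y/z-b\,z/y}{a+b},
\]
which is negative as soon as $z/y>(3a-1)/b$. So $z/y\le(3a-1)/b$ for all time. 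In the unit-step discrete flow, likewise $z_{n+1}/y_{n+1}\le W_1/W_2\le(3a-1)/b$ while $y_n\le z_n$.

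More precisely, $y/z\to k$, the ratio in the eigenvector $e_1=[1,k,0]^T$ of the paper's proof ($k\approx0.455$ for $a=3$, $b=2$). Correspondingly $\kappa_z\to1-\lambda_1\approx+0.054$ in that case. So your Step 2 claim that $\kappa_z$ stays negative and bounded away from $0$ also fails.

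Your heuristic that $y/x$ stays bounded is backwards as well. While $y\le z$, the same two bounds give $\kappa_x-\kappa_y\ge\frac{b}{a+b}-\frac1a=\frac{ab-a-b}{a(a+b)}>0$, so $y/x\to\infty$. The true picture is that $y$ and $z$ move together, and $x$ separates from both.

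\textbf{What is provable.} The paper's argument really delivers only the separation of the non-gateway edges $ij$. Row three of $A$ gives $w_{n,3}=a^{-n}$ exactly before normalization, while $w_{n,1},w_{n,2}\asymp\lambda_1^n$ with $\lambda_1>1/a$. The paper's closing claim that $w_{n,2}=o(\lambda_1^n)$ contradicts its own $kc_1\lambda_1^n$ entry. So ``intra-community edges'' can only be read as the edges between non-gateway vertices.

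Your framework proves this part more simply than the eigenvalue computation does. The one-move plan gives $W_3\le d(u,v)/a$, and the potential $d(\cdot,C_i)$ gives $W_1\ge\frac{a-1}{a+b}z$. Hence
\[
\kappa_x-\kappa_z\ge\frac{a-1}{a+b}-\frac1a=\frac{a^2-2a-b}{a(a+b)}>0
\]
for every symmetric weighting, with no invariant region and no exact optimal plans needed. In the unit-step discrete flow this reads $w_{n+1,3}/w_{n+1,1}\le\frac{a+b}{a(a-1)}\,w_{n,3}/w_{n,1}$, and $\frac{a+b}{a(a-1)}<1$ for $a>b\ge2$.

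Drop the claim about $y$, or replace it by $y/z\to k<1$. The proposal then becomes a correct proof of the part of the theorem that is actually true.
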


\begin{proposition}[Practical-parameter separation on $G(a,b)$]
\label{prop:practical-separation}
On the same graph family $G(a,b)$ with $a>b\geq 2$, let $d_1,d_2,d_3$ denote the edge lengths for gateway-to-gateway, gateway-to-non-gateway, and non-gateway-to-non-gateway edge types, respectively. If $d_1\geq d_2\geq d_3>0$ and the node distribution in \eqref{mass dist} uses $\alpha=1/2,p=2$, then the corresponding Wasserstein updates satisfy $W_1>W_2>W_3$ and
\[
    \frac{W_3}{W_1}
    \leq
    q_{a,b}\frac{d_3}{d_1},
    \qquad
    q_{a,b}:=\frac{(a-1)(a+b)}{a(2a+b-1)}<1 .
\]
Consequently, starting from unit edge lengths and applying the unit-step normalized update, the non-gateway intra-community edge contracts geometrically relative to the inter-community edge. 
\end{proposition}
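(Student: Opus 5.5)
The plan is to use the symmetry of $G(a,b)$ to reduce everything to three edge types, bound $W_3$ from above by an explicit transport plan, and bound $W_1$ from below by a $1$-Lipschitz test function via Kantorovich duality. The hypothesis $d_1\geq d_2\geq d_3$ is used only to compare the exponential weights $e^{-d^2}$ in \eqref{mass dist}. Under the symmetries of $G(a,b)$ (permuting the $C_i$ together with the $p_i$, and permuting non-gateway vertices inside a clique), every edge is of type $1,2$ or $3$. It is therefore consistent to assign one length per type.

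\emph{Step 1: upper bound on $W_3$.} Take a non-gateway edge $xy$ in $C_i$. Both $x$ and $y$ have neighbourhood $C_i$ minus themselves: $a-1$ non-gateway vertices and the gateway $p_i$. Put $C=(a-1)e^{-d_3^2}+e^{-d_2^2}$. Then $m_x^{1/2,2}$ and $m_y^{1/2,2}$ coincide off $\{x,y\}$. The only discrepancy is mass $\tfrac12-\tfrac12 e^{-d_3^2}/C$ sitting at $x$ that must go to $y$, so moving it along $xy$ gives
\[
W_3\leq \tfrac12\bigl(1-e^{-d_3^2}/C\bigr)d_3 .
\]
Since $d_2\geq d_3$, we have $C\leq a e^{-d_3^2}$, hence $W_3\leq \tfrac{a-1}{2a}\,d_3$.

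\emph{Step 2: lower bound on $W_1$.} Take a gateway edge $p_ip_j$. The neighbourhood of $p_i$ consists of the $a$ non-gateway vertices of $C_i$ (length $d_2$) and the $b$ other gateways (length $d_1$). Let $S=C_i$, which contains $p_i$. Use the test function $f(v)=\min\{d(v,S),d_1\}$; it is $1$-Lipschitz for the path metric, and then
\[
W_1\geq \sum_v f\,(m_{p_j}-m_{p_i}) \geq d_1\bigl(m_{p_i}(S)-m_{p_j}(S)\bigr).
\]
Every vertex outside $S$ is at distance $\geq d_1$ from $S$, because reaching it from $S$ requires a gateway--gateway edge. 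The comparison $e^{-d_1^2}\leq e^{-d_2^2}$ gives two bounds:
\[
m_{p_i}(S)\geq \tfrac12+\tfrac{a}{2(a+b)}, \qquad m_{p_j}(S)=m_{p_j}(p_i)\leq \tfrac{1}{2(a+b)}.
\]
Hence $W_1\geq \frac{2a+b-1}{2(a+b)}d_1$. Dividing the bound of Step 1 by this one yields exactly $W_3/W_1\leq q_{a,b}\,d_3/d_1$. The inequality $q_{a,b}<1$ is elementary: it reduces to $a^2-a+ab-b<2a^2+ab-a$, which always holds.

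\emph{Step 3: the middle inequalities and the consequence.} The intermediate ordering $W_1>W_2>W_3$ is where I expect the real work. For a type-2 edge $p_ix$ I would again bound $W_2$ above by an explicit plan: the common neighbours are the other $a-1$ vertices of $C_i$, and the mass on the $b$ other gateways goes to $x$ at cost $\leq d_1+d_2$, weighted by the small factor $be^{-d_1^2}/(ae^{-d_2^2}+be^{-d_1^2})$. I would bound $W_2$ below by a test function separating $p_i$ from $x$, and then compare it with the bounds on $W_1$ and $W_3$. These comparisons need $a>b$ and some care, and may require a sharper plan than the naive one, so this is the main obstacle.

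Once Steps 1--3 are done, the consequence follows by induction. With unit step $\varepsilon=1$, the update is $w^{(i+1)}=(1-\kappa)w^{(i)}=W$. Normalization rescales all lengths by a common factor, which preserves both the ordering and the ratios. So the hypothesis $d_1\geq d_2\geq d_3$ propagates from the unit initial lengths, and $d_3^{(i)}/d_1^{(i)}\leq q_{a,b}^{\,i}$.
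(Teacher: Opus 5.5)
Your Steps 1 and 2 are correct and already give the ratio bound. Write $E_\ell=e^{-d_\ell^2}$, $C_g=aE_2+bE_1$, $C_n=E_2+(a-1)E_3$. Your Step 1 plan is in fact optimal. Your cut function $\min\{d(\cdot,C_i),d_1\}$ recovers exactly the $d_1$-coefficient $\frac{2aE_2+(b-1)E_1}{2C_g}$ of $W_1$. Using Kantorovich duality this way certifies the lower bound more cleanly than the paper's appeal to optimality of an explicit plan.

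The genuine gap is Step 3: you do not prove $W_1>W_2>W_3$. This is not a side claim; it is the invariant your final induction needs. Steps 1 and 2 at iteration $n+1$ presuppose $d_1^{(n+1)}\ge d_2^{(n+1)}\ge d_3^{(n+1)}$, i.e. $W_1\ge W_2\ge W_3$ at iteration $n$. The same ordering keeps each direct edge a shortest path. That is what makes the neighbour distances in the measures equal the edge lengths, and makes the unit-step update send $w$ to $W$. So the hypothesis is not used only for the exponential comparisons, and ``the hypothesis propagates'' is unjustified as written.

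Your Step 2 bound also cannot be reused for $W_1>W_2$, because it discards the $d_2$-contribution. For $a=3$, $b=2$ and unit lengths it gives $W_1\ge\frac{7}{10}$, whereas $W_2=\frac{11}{15}$. You need two exact values.

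First, the full $W_1=\frac{2aE_2+(b-1)E_1}{2C_g}d_1+\frac{aE_2}{C_g}d_2$. It follows from the $1$-Lipschitz test function that equals $2d_2+d_1$ on the non-gateways of $C_i$, $d_1+d_2$ at $p_i$ and on all other communities, $d_2$ at $p_j$, and $0$ on the non-gateways of $C_j$.

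Second, $W_2=\frac{(a-1)E_3}{2C_n}d_2+\frac{bE_1}{2C_g}(d_1+d_2)$. This holds because every vertex where $m_{p_i}$ falls short of $m_x$ is a non-gateway of $C_i$. Each such vertex is at distance exactly $d_2$ from $p_i$ and $d_1+d_2$ from every other gateway, so the natural plan is matched by the test function $d(\cdot,C_i\setminus\{p_i\})$.

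With $X=E_1/E_2\le 1$ and $Y=E_3/E_2\ge 1$ you then get $W_2>\frac{a-1}{2a}d_2\ge\frac{a-1}{2a}d_3\ge W_3$. Elementary bounds in $X$ and $Y$ give $W_1-W_2>\frac{2a-1}{2(a+b)}d_1+\frac{a-2b}{2(a+b)}d_2$. This is positive if $a\ge 2b$; otherwise, using $d_2\le d_1$, it is at least $\frac{3a-2b-1}{2(a+b)}d_1>0$. This is essentially the comparison the paper carries out from its exact formulas for $W_1,W_2,W_3$.
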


The proof of Theorem \ref{ricci-flow-detectation} utilizes the asymptotic behavior of a linear difference-equation system formed by different edge structures; see Appendix \ref{proof of ricci-flow}. Proposition \ref{prop:practical-separation}, proved in Appendix \ref{proof-alpha-half-p-two}, gives a weaker but more directly relevant analogue for the practical parameters used in our experiments. It provides idealized theoretical support for the separation mechanism observed empirically under the choice $\alpha=1/2,p=2$. It should be interpreted as a stylized explanation of why Ricci-flow filtering can separate edge types, rather than as a guarantee of empirical performance on embedding-derived retrieval graphs. Because the \((\alpha,p)=(1/2,2)\) node measures depend exponentially on the current edge lengths, this practical-parameter extension is nonlinear and does not give the same closed-form eigenvalue proof as the uniform-neighborhood case in Theorem \ref{ricci-flow-detectation}.

Together, Theorem \ref{ricci-flow-detectation} and Proposition \ref{prop:practical-separation} motivate the surgery rule by showing that, in idealized community graphs, normalized Ricci-flow updates can separate inter-community and intra-community edge types. Algorithm \ref{algo:ricci_flow} uses a finite-time version of this idea: after $M$ normalized iterations, it applies a threshold to the query edges. In the normalized implementation, the value $\eta=1$ should be read as a relative cutoff because Algorithm \ref{algo:ricci_flow} rescales the edge weights so that their average is 1 at each iteration. Thus query edges with final normalized weights below 1 are below the graph-average scale and are treated as within-community candidates, while query edges above 1 are treated as stretched edges and filtered out. This threshold worked in our experiments, but it is a heuristic finite-time surgery threshold rather than a theorem-derived universal constant.


Instead of iterating until a theoretical convergence criterion is met, our implementation uses a finite iteration budget. In practice, we found that setting $M=20$ in Algorithm \ref{algo:ricci_flow} is enough in most cases to create a useful separation of query-edge weights. How to further improve the time efficiency of Ricci-Filtration is one of our future works.

\begin{figure*}[h]
\centering
  \includegraphics[height=3.5cm,width=\linewidth]{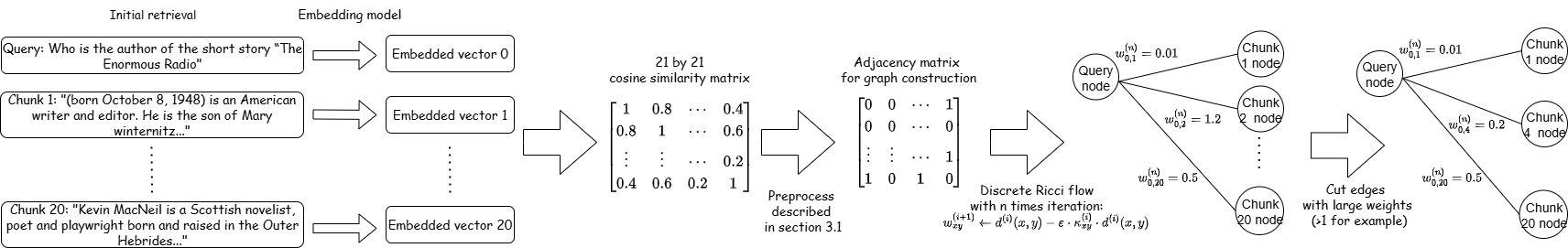}
  \caption{ Flowchart of Discrete Ricci flow for selecting chunks with positive Ricci curvature (small edge weights) with query node.  }
  \label{fig:ricciflow illustration}
\end{figure*}
\subsection{Graph construction from embeddings}
\label{graph-construction}
In Ricci-flow iteration, the neighborhood topology and the positive edge lengths play different roles. We therefore distinguish three objects: cosine dissimilarities between embeddings, a binary edge-incidence matrix defining the initial graph topology, and the positive edge lengths updated by Ricci flow. Let $z_i$ be the embedding of node $i$. For every pair of query or candidate chunk nodes, define the cosine dissimilarity
\begin{equation}
    \label{cosine-distance}
    r_{ij}=1-\operatorname{cos}(z_i,z_j)\,.
\end{equation}
Let $\tau$ be a percentile threshold computed from the off-diagonal values of $r_{ij}$ among the query and initially retrieved candidate chunks. We define the binary edge-incidence matrix $B$ by
\begin{equation}
    \label{edge-incidence}
   B_{ij}=\begin{cases}
      1 & r_{ij}\geq \tau,\ i\neq j,\\
      0 & \text{otherwise}.
   \end{cases}
\end{equation}
The entry $B_{ij}=1$ means that the edge $\{i,j\}$ is present in the computational graph; it should not be interpreted as a claim of high semantic similarity. Since the threshold is applied to cosine dissimilarity, these edges include relatively dissimilar pairs that allow the Ricci flow to stretch cross-community relations. We additionally force $B_{qj}=B_{jq}=1$ for every candidate chunk node $j$ so that every query-chunk edge is available after the flow. The initial graph and initial edge lengths are
\begin{equation}
    \label{initial-lengths}
    E_0=\{\{i,j\}:B_{ij}=1\}, \qquad w_{ij}^{(0)}=1\quad \text{for }\{i,j\}\in E_0\,.
\end{equation}
Thus the value 1 in $B$ encodes edge incidence, whereas the value 1 in $w^{(0)}$ encodes the common initial edge length. This graph-construction threshold $\tau$ is different from the post-flow surgery threshold $\eta$. Because Algorithm \ref{algo:ricci_flow} keeps the average edge weight at 1, we set $\eta=1$ as a post-normalization cutoff for deciding which query-connected chunks remain after the flow. Edges with final normalized query-edge weights less than or equal to 1 are retained as relatively close to the query, while edges above 1 are treated as stretched inter-community edges. More rigorous methods of defining both the initial graph topology and surgery threshold will be our future work.
\subsection{Illustrative example}
To see how the discrete Ricci flow iteration works, we give a simple illustrative example in this section. We randomly pick one query  from HotpotQA dataset and utilize top 20 related chunks by cosine similarity search. Other retrieval settings are the same as in section \ref{RAG}. The corresponding binary edge-incidence matrix has shape $21\times 21$. The threshold $\tau$ in \eqref{edge-incidence} was set to the 50th percentile of the cosine-dissimilarity distribution in \eqref{cosine-distance}. Flowchart \ref{fig:ricciflow illustration}  summarizes how the initial graph is constructed and how the final query-connected chunks are determined after discrete Ricci flow iteration.  We set $\alpha=0.5, p=2$ in the node distribution $m_{x}^{\alpha,p}$.

\begin{figure}[h]
\centering
  \includegraphics[width=\linewidth]{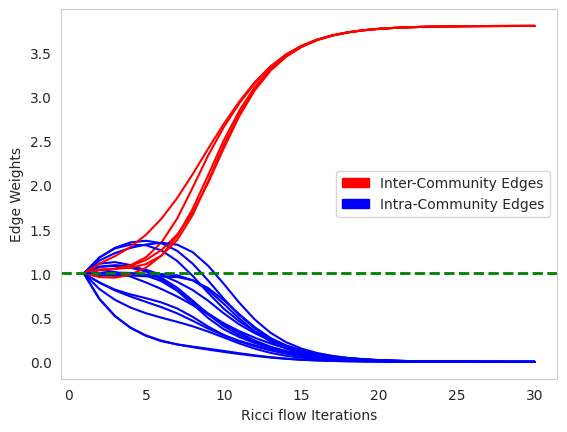}
  \caption{ Discrete Ricci flow iteration of 20 Edges connecting with user query and retrieved chunks. Inter-community edges imply the corresponding chunk nodes  are in geometrically different from query nodes.  Intra-community edges imply the corresponding chunk nodes  are in geometrically similar to query nodes.}
  \label{fig:rf_example}
\end{figure}

 Figure \ref{fig:rf_example} demonstrates the variation of edge weights  along with the finite-time Ricci-flow iteration process on graphs generated by the sampled query and chunks. When the iteration process starts, the original weight of all connected edges is set to 1. As the discrete Ricci flow evolves, we can see  a clear trend that weights of edges either move toward 0 or toward a larger positive value. In addition, edges with larger final weights will have negative Ricci curvature with query node, which are geometrically different with query and will be removed before the reranking process. On the other hand, chunks having nearly zero  edge weights, i.e., positive Ricci curvature to query node will be maintained. Therefore, the discrete Ricci flow will help us rule out irrelevant chunks dynamically and keep the both geometric and semantic associated information in the end, leading to a more robust RAG reranking system. In this figure, the surgery cutoff is $\eta=1$ after the normalization in Algorithm \ref{algo:ricci_flow}; equivalently, we keep query-connected chunks whose final normalized query-edge weight is no larger than the average edge-weight scale. This choice is a simple empirical cutoff for the normalized flow, not a claim that the unnormalized edge weights have a universal threshold at 1. As illustrated in Figure \ref{fig:rf_example}, an iteration count of 20 is sufficient to produce a visible separation of query-edge weights in this example.
 
\section{Experiments}
\label{Experiments}
\subsection{Datasets}
To comprehensively evaluate the performance of
Ricci-Filtration on general QA tasks, we follow the setting in \citet{Han2025RAGVG} and select five widely used datasets that cover different perspectives. More specifically,  we choose the Stanford Question Answering Dataset (SQuAD2.0) \cite{rajpurkar-etal-2018-know} and TriviaQA \cite{triviaqa}  for the single-hop QA task. As to the multi-hop QA task, we select HotPotQA \cite{yang-etal-2018-hotpotqa}, MultiHopRAG \cite{Tang2024MultiHopRAGBR}, and MuSiQue \cite{Trivedi2021MM} datasets. MuSiQue and TriviaQA are used in the further ablation studies.
See detailed introduction of selected datasets in Appendix \ref{Dataset}. We use Accuracy, Precision (P), Recall (R), and F1-score as evaluation metrics for the SQuAD2.0 and HotPotQA datasets, and only accuracy is reported for the TriviaQA, MultiHop-RAG and MuSiQue.

\begin{table*}[!htp]
\caption{Evaluation results (\%) on SQuAD2.0 and HotpotQA datasets with different reranker methods and generation LLMs.
}
\label{tab:hotpotandsquad}

\begin{small}
\centering
\resizebox{\linewidth}{!}{
\begin{tabular}{l|cccccccc|cccccccc}
\toprule
\textbf{Methods}      & \multicolumn{8}{c|}{\textbf{SQuAD2.0}}                                                         & \multicolumn{6}{c}{\textbf{HotpotQA}}                                                     \\ \cmidrule{2-17}
& \multicolumn{4}{c|}{\textbf{Llama 3.1-8B Instruct}}          & \multicolumn{4}{c|}{\textbf{gpt-4o-mini}} & \multicolumn{4}{c|}{\textbf{Llama 3.1-8B Instruct}}          & \multicolumn{4}{c}{\textbf{gpt-4o-mini}} \\ 
\cmidrule{2-17}

& P     & R     &F1  &\multicolumn{1}{c|}{Acc}   & P          & R         & F1    &Acc    & P     & R     & F1  & \multicolumn{1}{c|}{Acc} & P         & R         & F1     &Acc   \\
\midrule
No Reranker                       &47.64 &52.25& 48.48& \multicolumn{1}{c|}{52.00}& 65.34   & 70.89 &66.46  &69.50   & {59.43} & { 57.97} &  56.54 & \multicolumn{1}{c|}{61.50}   & {67.49}     & {67.31}  & 65.22 & 69.40  \\
Cross-Encoder Reranker                       &47.74&52.14& \textbf{48.49} & \multicolumn{1}{c|}{50.90}      &63.60   &69.46   & {64.86} & {68.20} & \textbf{62.55}& {59.57}     & {58.98}     & \multicolumn{1}{c|}{64.00}  &\textbf{69.74} &\textbf{69.84}   &\textbf{67.57} & \textbf{72.50}  \\
LLM-based Reranker  & 37.86 & 41.20 & 38.25 & \multicolumn{1}{c|}{49.50}      &37.52    &42.58    &38.56 & 52.10 & 62.45&\textbf{60.63}      & \textbf{59.49}     & \multicolumn{1}{c|}{\textbf{64.40}}    &69.46 &69.29 &67.20&71.90 \\
\textbf{Ricci-Filtration}& \textbf{52.13}&\textbf{57.28} &\textbf{53.00}&   \multicolumn{1}{c|}{\textbf{55.40}}   & \textbf{69.08} &\textbf{73.58} & \textbf{70.17} & \textbf{73.00}	& 62.39 & 59.21	&59.13	& \multicolumn{1}{c|}{63.00} &65.81 &63.97 &63.36 &67.00 \\
\bottomrule
\end{tabular}
}   
\end{small}
\end{table*}

\begin{table*}[!htp]
\caption{Accuracy (\%) on the MultiHop-RAG dataset across different query types with different reranker methods and generation LLMs.}
\label{tab:multihop}
\resizebox{\linewidth}{!}{
\begin{tabular}{l|ccccc|ccccc}
\toprule
\textbf{Methods}      & \multicolumn{5}{c|}{\textbf{LLama 3.1-8B Instruct}}                                                & \multicolumn{5}{c}{\textbf{gpt-4o-mini}}                                             \\ 
\cmidrule{2-11}
                     & \textbf{Inference}      & \textbf{Comparison}     & \textbf{Null}           & \textbf{Temporal}       & \textbf{Overall}        & \textbf{Inference}      & \textbf{Comparison}  & \textbf{Null}           & \textbf{Temporal}       & \textbf{Overall}        \\ 
\midrule
No Reranker                      &87.62 &55.96       & 57.81      &  58.83       & {66.93}          & 94.61 &63.79    &93.02          & 56.60        & {75.43}          \\

Cross-Encoder Reranker                      & 90.93 &56.66       & 60.80        &  57.46       & {68.27}          & 96.69 & \textbf{66.00}     & 94.02       & \textbf{60.21}      & \textbf{77.78}          \\
LLM-based Reranker  &90.32          & 54.09         &56.81        &  60.03          & 67.33& 95.22         & 63.67    & 93.02          & 56.60        & \textbf{75.59} \\ 
\textbf{Ricci-Filtration} & \textbf{93.87} &\textbf{58.53} & \textbf{83.39} & \textbf{60.21} & \textbf{73.12} &\textbf{97.18}	& 64.02&\textbf{95.00} & 56.60 & 76.56 \\
\bottomrule
\end{tabular}
}
\end{table*}
\subsection{Evaluation settings}
\subsubsection{RAG}
\label{RAG}
We employ cosine similarity based dense retrieval approach as our RAG
method \cite{karpukhin-etal-2020-dense}, which is common in literature. Specifically, we first split the text into chunks, each containing approximately 256 tokens. For indexing, we use OpenAI's text-embedding-3-small model, which has demonstrated effectiveness across various tasks \cite{Deng2025RevealingTN}. For the standard RAG baseline (without a reranker), we retrieve the top five chunks based on cosine similarity. To generate responses, we utilize the open-source Llama-3.1-8B-Instruct model \cite{grattafiori2024llama3herdmodels}. We also make a comparison under gpt-4o-mini to show the robustness of our method. 


\paragraph{Comparison protocol.}
All methods are evaluated from the same dense-retrieval pool and under the same downstream generation settings. For each query, the retriever first returns the top 20 candidates using the same embedding model and cosine-similarity index. The no-reranker baseline sends the top five retrieved chunks directly to the generator, while the cross-encoder and LLM-based reranker baselines rerank the same top-20 pool and then send the top five selected chunks to the generator. Ricci-Filtration is evaluated under the same retrieval pool, generation model, generation prompt, and evaluation metrics; the intended intervention is the geometric filtering step inserted before reranking. Unless otherwise stated, the remaining chunks after Ricci-Filtration are reranked with the same bge-reranker-base cross-encoder used in the cross-encoder baseline. Thus, the primary comparison against the cross-encoder baseline isolates the effect of adding Ricci-Filtration before the same downstream reranker.

\subsubsection{Reranker baseline}

For comparison, we choose cross-encoder reranker and LLM-based reranker \cite{LLMreranker} as the baseline methods. We select the top five reranked chunks from an initial pool of the top 20 similarity-based candidates. For time efficiency and cost, we employ open-source cross encoder named bge-reranker-base \cite{bge_embedding}, which is pre-trained on  large-scale pairs data using contrastive learning. In ablation studies, we tried  different cross encoders like  MS-MARCO-MINILM-L6-V2 and MS-MARCO-MINILM-L12-V2, which are fine-tuned versions of the original MiniLM model\cite{Wang2020MiniLMDS} on MS MARCO task \cite{nguyen2017ms}. As to the LLM-based reranker, we prompt gpt-4o-mini for reranking. We chose pointwise reranking because it provides clear scores that are easy to use and allows useful optimizations. The reranking prompt is given in Appendix~\ref{llm-reranker-prompt}.

\subsubsection{Ricci-Filtration}

As suggested by \citet{Ni2019CommunityDO}, we set $\alpha=0.5, p=2$ in the node distribution $m_{x}^{\alpha,p}$. For large $p$, the far neighbors of the source node will be heavily discounted. Wasserstein distance in \eqref{riccicurvature} is calculated by optimal transportation distance function in python package CVXPY \cite{diamond2016cvxpy} . We compute cosine dissimilarities from the embedded vectors and use them to construct the binary edge-incidence matrix in \eqref{edge-incidence}. A threshold is required to convert these pairwise dissimilarities into a graph topology. In the following experiments, we set $\tau$ to the 50th percentile of the off-diagonal cosine-dissimilarity values\footnote{The ablation tables report this graph-construction threshold as a percentile. For example, 50\% means that $\tau$ is the median off-diagonal cosine dissimilarity, and a non-query pair is included when $r_{ij}\geq \tau$.}. In addition, we force the query node to be connected with each chunk node in order to have all information of edges corresponding to query and chunks such as weights available after discrete Ricci flow iteration. Each present edge is initialized with unit length before the finite-time normalized flow. The finite-time iteration budget is set to $M=20$ based on our experience. To keep the comparison with the cross-encoder baseline controlled, we use the same bge-reranker-base cross-encoder after Ricci-Filtration; the difference is that the cross-encoder baseline reranks all 20 retrieved candidates, whereas Ricci-Filtration first removes geometrically irrelevant chunks and then reranks the remaining subset.

\subsection{Results}

\begin{table*}[h]
  \caption{Ablation studies of Ricci-Filtration on the Accuracy (\%) of TriviaQA dataset}
  \label{ablation-study}
  \begin{center}
  \begin{small}
      \begin{sc}
        \begin{tabular}{l|c|c|c|c|c|c|c|c}
          \toprule
          Methods &$k$       & $n^{*}$  &$\tau$ &$p$ & $M$ &Embedding model&Reranker model&Acc \\
          \midrule
          Cross-Encoder     &20  & 20&-& -& -& text-embedding-3-small &  ms-marco-MiniLM-L12-v2 &69.00\\
          
          \hline
        Cross-Encoder     &20  & 10&-& -& -& text-embedding-3-small & ms-marco-MiniLM-L12-v2&69.00\\
          
          \hline
                    
            Cross-Encoder     &20  & 5&-& -& -& text-embedding-3-small &  ms-marco-MiniLM-L6-v2 &68.00\\
                      \hline
                  Cross-Encoder     &20  & 5&-& -& -& text-embedding-3-small &  ms-marco-MiniLM-L12-v2 &69.00\\
          
          \hline
          Ricci-Filtration    & 10& 7.0&50\%& 2& 20& text-embedding-3-small &  ms-marco-MiniLM-L12-v2 &\textbf{72.00}\\
                    \hline
          Ricci-Filtration    & 20 & 16.7&50\%& 2& 20& text-embedding-3-small & ms-marco-MiniLM-L12-v2 &\textbf{74.00}\\ 
          \hline
          Ricci-Filtration    & 20 & 15.9& 50\%& 5& 20& text-embedding-3-small & ms-marco-MiniLM-L12-v2&70.00\\
                    \hline
          Ricci-Filtration    & 20 & 13.5& 50\%& 1& 20& text-embedding-3-small & ms-marco-MiniLM-L12-v2 &69.00\\
          \hline
            Ricci-Filtration   & 20 & 17.0& 25\%& 2& 20& text-embedding-3-small &ms-marco-MiniLM-L12-v2 &69.00\\
          \hline
          Ricci-Filtration    & 20 & 16.0& 75\%& 2& 20& text-embedding-3-small & ms-marco-MiniLM-L12-v2 &64.00\\        
                    \hline
            Ricci-Filtration   & 20 & 14.8& 50\%& 2& 10& text-embedding-3-small & ms-marco-MiniLM-L12-v2 &\textbf{71.00}\\
          \hline
          Ricci-Filtration    & 20 & 10.4& 50\%& 2& 1& text-embedding-3-small & ms-marco-MiniLM-L12-v2 &65.00\\  
          \hline
                      Ricci-Filtration   & 20 & 16.6& 50\%& 2& 20& text-embedding-ada-002 & ms-marco-MiniLM-L12-v2 &66.00\\
                                \hline
          Ricci-Filtration    & 20 &16.7&50\%& 2& 20& text-embedding-3-small & bge-reranker-base&\textbf{72.00}\\ 
          \hline
          Ricci-Filtration    & 20 &16.7&50\%& 2& 20& text-embedding-3-small & ms-marco-MiniLM-L6-v2 &\textbf{74.00}\\ 
                    \hline
        \end{tabular}
      \end{sc}
      \end{small}
        {\scriptsize\emph{Note.} $n$ for Ricci-Filtration methods represents the average number of chunks fed into reranker model after Ricci-Filtration.\par}
  \end{center}
  
\end{table*}
Table \ref{tab:hotpotandsquad} demonstrates the comparisons of different reranker methods on SQuAD2.0 and HotpotQA datasets where precision, recall, F1-score and accuracy are reported. We also tried different generation LLMs to see the robustness of different baselines.  Following the standard practice in \citet{Han2025RAGVG}, we only report accuracy for MultiHop-RAG in Table \ref{tab:multihop}. The best performing metrics under different generation LLMs are highlighted in bold. From the results in Table \ref{tab:hotpotandsquad} and Table \ref{tab:multihop} , we can see that Ricci-Filtration has superior performance in SQuADv2 and MultiHop-RAG datasets. In SQuADv2, Ricci-Filtration dominates all the metrics in under both generation LLMs and improves the metrics by about 5\% in general. Similar performance can be observed in MultiHop-RAG with a significant improvement in the accuracy in null queries (from 60.80\% to 83.39\%), showing the potential of Ricci-Filtration for specific query type. 

Notably, we observe that Ricci-Filtration has weakest performance on the HotpotQA benchmark across both generation LLMs, suggesting its limitation in multi-hop reasoning tasks where only critical and connected chunks should be used. However, current Ricci-Filtration tends to maintain excessive number of chunks after iteration, introducing noise that misleads the reranker during multi-step reasoning. Ablation studies on MuSiQue (Appendix \ref{MuSiQue}) also confirm this trend.  There are clues that the margin of Ricci-Filtration might be diminished when strong generation LLMs are used, which is expected. However, the improvement brought by Ricci-Filtration suggests that geometric filtering can improve answer quality without using an LLM to make the filtering decision. This should be interpreted as a resource trade-off rather than a blanket deployment-cost claim, since the Ricci-flow iterations introduce measurable latency (Appendix \ref{time efficiency tab}).

As to the LLM-based reranker, we see that its performance varies among different datasets. One reason is that the ability of LLM-based rerankers relies heavily on the underlying reranking LLM and reranking prompt. It is also sensitive to workflow design. Another drawback is the per-query API cost of scoring retrieved chunks. To keep the experiment bounded, we use gpt-4o-mini as the reranker model. We found that replacing it with stronger models can improve results, but at a substantially higher API cost. Compared with this LLM-based reranker, Ricci-Filtration avoids LLM calls during the filtering step, but its iterative graph computation still incurs latency, so the total deployment cost depends on hardware, implementation, and workload.


\subsection{Ablation studies}
\label{ablation}

To evaluate the individual contributions of its components, we conduct ablation studies on Ricci-Filtration using Llama-3.1-8B-Instruct. Table \ref{ablation-study} details performance variations across node distribution parameters, graph-construction threshold percentiles, iteration times, and model selections (embeddings and rerankers). We utilize cross-encoder based reranker as the primary baseline due to its competitive performance in previous evaluations. Additional clustering-baseline results are provided in Appendix~\ref{kmeans-ablation}.

In summary, Ricci-Filtration generally outperforms the simple cross-encoder-based reranker under various comparable settings (see bold results in Table \ref{ablation-study}), highlighting the framework's robustness across different architectures. On the other hand, Ricci-Filtration exhibits relatively low sensitivity to the choice of the node distribution parameter $p$ and the underlying reranker model. Regarding the graph-construction threshold percentile, a moderate value yields optimal performance. While a lower threshold creates a dense graph that may introduce excessive noise, an overly high threshold discards critical structural information required for the Ricci flow iterations. The number of iterations also significantly impacts performance and a sufficient number of iterations is essential for the Ricci-Filtration process to achieve decent results. Finally, the performance of  Ricci-Filtration  relies on the underlying embedding model, which is expected because the initial graph is constructed by cosine-dissimilarity thresholding over embedding vectors.  If the  underlying embedding space is not precise, the results of discrete Ricci flow iteration are likely to be misleading. Furthermore, the last two rows of Table \ref{ablation-study} demonstrate that Ricci-Filtration maintains its superior performance under different reranker models, highlighting the framework's robustness across different architectures.

\section{Discussion and Limitations}
\label{discussion}

This work focuses on Ricci-Filtration for QA tasks. However, applying it to sensemaking queries (query-focused summarization) is more complex \cite{GraphRAG}, as these require analyzing global trajectories and entity connections. While performing discrete Ricci flow across the entire corpus could lead to GraphRAG-style community detection \cite{GraphRAG}, the computational cost of iterative curvature evaluation is currently prohibitive. Future research will focus on optimizing Ricci flow efficiency to support global sensemaking tasks on large-scale graphs. 

 Currently, we use cosine-dissimilarity thresholding to construct the initial edge-incidence graph for its simplicity. However, as shown in Table \ref{ablation-study}, Ricci-Filtration is highly sensitive to the quality of the underlying embedding model. Exploring more robust methods for initial graph construction remains a key research direction.
On the other hand, experiments for HotpotQA and ablation study for MuSiQue (Table \ref{ablation-study-musique} in Appendix \ref{MuSiQue}) imply that current Ricci-Filtration has limitations in multi-hop reasoning where only critical and connected chunks should be used. How to combine the  ``geometric denoising'' ability of discrete Ricci flow with the reasoning ability of RAG reranker appropriately is an important research direction. 

The major limitation of Ricci-Filtration is its time complexity compared to other methods, as shown by Table \ref{time-table} in Appendix \ref{time efficiency tab}. While 20 iterations were used for most experiments, ablation studies suggest that 10 iterations remain effective. Consequently, developing a more time efficient iteration method with optimized weight cut-offs is also a critical direction for future application.

\section*{Impact Statement}
This paper presents a method for enhancing the performance of RAG with reranker through a differential geometry method.  The broader impact of this work involves contributing to the development of more reliable and factually grounded AI assistants. This has direct applications in critical domains such as legal research, medical information retrieval, and education, where reducing ``hallucinations'' and ensuring source-verifiability are paramount for user safety and trust. On the other hand,  while our method improves accuracy, it remains susceptible to biases inherent in the underlying retrieval corpus. If the source data contains historical or social prejudices, the Ricci-Filtration may still prioritize these biased viewpoints, thereby amplifying them in the final generation. Furthermore, the deployment of multi-stage RAG pipelines increases the total inference-time computational overhead. We encourage practitioners to consider the environmental impact of increased energy consumption and to implement our system alongside robust bias-detection audits and data-governance frameworks to ensure equitable and sustainable deployment.


\bibliography{custom}
\bibliographystyle{icml2026}

\newpage
\appendix
\onecolumn

\section*{Appendix}
\section{Question Answering Dataset}
\label{Dataset}

For QA tasks, we use the following four widely
used datasets:
\begin{enumerate}
    \item \textbf{SQuAD2.0 }\cite{rajpurkar-etal-2018-know}:  The Stanford Question Answering Dataset (SQuAD) is a reading comprehension benchmark composed of questions created by crowdworkers based on a collection of Wikipedia articles. For each question, the answer is either a specific text span drawn from the associated passage or, in some cases, the passage does not contain an answer. SQuAD2.0 extends SQuAD1.1\cite{squad1} by retaining its original answerable questions and adding many adversarially written unanswerable questions that are designed to closely resemble answerable ones. To ensure a more challenging evaluation, we randomly selected 1,000 questions from the validation set of SQuAD2.0, which also involves unanswerable questions. Additionally, we treat SQuAD2.0 as a multi-document QA task and build a single RAG system to handle all questions. If Ricci-Filtration performs well here, it proves  generalizability.

    \item \textbf{TriviaQA} \cite{triviaqa}: A large-scale dataset with 650k+ question-answer-evidence triples. It is known for having more complex, ``fact-heavy'' questions that require a high-precision reranker to find the specific snippet of knowledge. Since the typical context is long for each question,  we  randomly selected 100 questions from the validation set of TriviaQA for evaluation and we build a uniform RAG system to handle all the selected questions.
    \item \textbf{Hotpot}\cite{yang-etal-2018-hotpotqa}  We utilize HotpotQA, a benchmark dataset for multi-hop reasoning that associates 10 context paragraphs with each question. Since modern LLMs can readily solve the dataset's simpler queries, we focus our evaluation on a more rigorous subset comprising 1,000 'hard bridging' questions randomly sampled from the validation set. We treat HotpotQA as a multi-document QA task, employing a unified RAG system to process all queries. 
    \item \textbf{MultiHop-RAG} \cite{Tang2024MultiHopRAGBR} MultiHop-RAG is a QA dataset designed to evaluate retrieval and reasoning across multiple documents with metadata in RAG pipelines, where document metadata aims to reflect complex scenarios commonly found in real-world RAG applications. Constructed from English news articles, it contains 2,556 queries with supporting evidence distributed across 2 to 4 documents. The dataset includes four query types: Inference queries synthesize claims about a bridge entity to identify it; Comparison queries compare similarities or differences and yield ``yes'' or ``no'' answers in general; Temporal queries examine event ordering with answers like ``before'' or ``after''; and Null queries where no answer can be derived from the retrieved documents. For evaluation, we view it as a single-document QA task. 
\item \textbf{MuSiQue} \cite{Trivedi2021MM} MuSiQue (Multihop Single-hop Question Composition) is a challenging dataset for multi-hop Question Answering (QA) designed to be hard to cheat. It enforces connected reasoning, where one reasoning step critically relies on information from another. It was built bottom-up by first taking existing single-hop questions from datasets like SQuAD and Natural Questions, then systematically linking these questions together. Rigorous filters were employed to ensure the resulting multi-hop question could not be answered if any intermediate step was skipped. The dataset involves approximately 25,000 examples with 2-4 hop questions. Similarly, we randomly selected 1,000 questions from the validation set of MuSiQue, treated it as a single document QA task and built a single RAG system to handle all questions for the challenging evaluation.
\end{enumerate}

\section{Theoretical foundation}
\label{theoretical foundation}

Note that most notations and definitions in this section are following the work in \citet{Ni2019CommunityDO}. For more detailed introductions, interested readers can refer to the original paper of \citet{Ni2019CommunityDO}.
\subsection{Notations and definitions}

We represent a network as an unweighted graph $G = (V, E)$, defined by a vertex set $V$ and an edge set $E$. The objective of community detection is to partition $G$ into a collection of $n$ disjoint, connected subgraphs $\{C_1, C_2, \dots, C_n\}$, referred to as communities. This is achieved by identifying a set of inter-community edges, those connecting distinct clusters, the removal of which isolates the subgraphs. Formally, a robust community structure is characterized by two fundamental properties
\begin{enumerate}
    \item High Intra-community Density: Vertices within any given community $C_i$ exhibit a high degree of connectivity.
    \item Low Inter-community Density: Vertices belonging to different communities are sparsely connected.
\end{enumerate}

Such topological features are common in empirical networks, including social, biological, and technological systems

We will use the following definitions and conventions in \cite{Ni2019CommunityDO}. Two vertices $i, j \in V$ are said to be adjacent ($i \sim j$) if there exists an edge $ij \in E$. We extend the unweighted model to a weighted graph (or metric graph) by defining a weight function $w: E \to \mathbb{R}_{\geq 0}$, which assigns a non-negative scalar $w_{ij}$ to each edge. A path of length $n$ between nodes $a$ and $b$ is defined as a sequence of edges $\{e_0, e_1, \dots, e_{n-1}\}$ where $e_i = v_i v_{i+1}$, such that $v_0 = a$ and $v_n = b$. The
length of the path $\{e_{0},....,e_{n-1}\}$ is defined to be $\sum_{i=0}^{n-1}w_{i(i+1)}$. The path is said to have n hops.

\subsection{The optimal transport problem}

The classical formulation of the optimal transport problem, originally proposed by Gaspard Monge in 1781, concerns the identification of a transport map that minimizes the total cost associated with relocating mass from a source distribution to a target distribution. In its physical motivation, this corresponds to the efficient transfer of raw materials (e.g., iron ore) from production sites to industrial sinks. To begin, let us briefly recall the notion of metric spaces and Borel measures on a metric
space. We define a metric space as a pair $(X;d)$ where $X$ is a set and $d$ is a distance function $d : X \times X \to \mathbb{R}_{>0}$ with the following
properties:
\begin{itemize}
    \item $d(x,y)=0$ if and only if $x=y$;
    \item $d(x,y)=d(y,x)$;
    \item $d(x,y)+d(y,z)\geq d(x,z)$ for all $x,y,z\in X$.
\end{itemize}

On a given metric space $(X, d)$, we consider the Borel $\sigma$-algebra, generated by the collection of all open sets in $X$ under countable unions, countable intersections, and relative complements. A Borel probability measure $\mu$ is a mapping from the Borel sets to the interval $[0, 1]$ that satisfies:

\begin{itemize}
    \item 
Normalization: $\mu(X) = 1$
\item Countable Additivity: For any sequence of pairwise disjoint Borel sets $\{A_i\}_{i=1}^{\infty}$,$$\mu\left(\bigcup_{i=1}^{\infty} A_i\right) = \sum_{i=1}^{\infty} \mu(A_i)$$
\end{itemize}

In the context of a finite graph $G = (V;E)$, the metric space $(X,d)$ is specialized by setting $X$ to be the discrete vertex set $V$. Under the discrete topology, the Borel $\sigma$-algebra is equivalent to the power set $\mathcal{P}(V)$; thus, every subset of $V$ is a Borel set. A Borel probability measure $\mu$ on $V$ is consequently characterized by a probability mass function $\mu:V\to [0,1]$ that satisfies the normalization condition  $\sum_{i\in V}\mu(i)=1$.

Given an edge weighted graph $(V;E;w)$ with $w_{ij} > 0$ for all edges, one introduces a metric $d$ on vertex set $V$ by

\begin{equation}
    \label{path-metric-appendix}
    d(v,v') = \min_{\gamma:v\leadsto v'} \sum_{e\in \gamma} w_e .
\end{equation}
where the minimum is taken over all edge paths from $v$ to $v'$. We call $d$ the induced metric from the edge weight $w$.
Note that by definition, $d(v,v')+d(v',v'')\geq d(v,v'')$ for any three vertices $v,v',v'' \in V$. This is exactly the definition we give in section \ref{discrete ricci flow}.

Consider two metric spaces, $X$ and $Y$, endowed with Borel probability measures $\mu$ and $\nu$, respectively. In the physical context of the Monge problem, $X$ and $Y$ represent the spatial distributions of supply centers (e.g., mines) and demand centers (e.g., factories), while $\mu$ and $\nu$ characterize the respective densities of material to be transported and consumed. We define a continuous cost function $c: X \times Y \to \mathbb{R}_{>0}$, where $c(x, y)$ denotes the cost incurred by transporting a unit of mass from location $x \in X$ to $y \in Y$. In the specific case where $X = Y$ and the transportation cost is proportional to the displacement under a constant rate, the cost function is typically identified with the metric of the underlying space, such that $c(x, y) = d(x, y)$

A transport map $T: (X,\mu) \to (Y,\nu)$ is a measure preserving map, i.e., for any Borel set $A\subset Y$, $\nu(A)=\mu(T^{-1}(A))$.
Monge's formulation of the optimal transportation problem is to find a transport map $T:X\to Y$ that realizes the infimum
\[
\inf \bigg\{\int_{X}c(x,T(x))d\mu(x) \bigg| \text{T is a transportation}\bigg\}
\]

A transformation $T: X \to Y$ that achieves the aforementioned infimum is termed an optimal transport map. In this general setting, the existence of such a map is not guaranteed by classical results, as the Monge formulation is highly non-linear and subject to strict feasibility constraints. A significant advancement in the field was established by \citet{Kantorovitch1958OnTT}, who reformulated the problem into a linear optimization framework, ensuring the existence of a solution. Kantorovich relaxed the deterministic map $T$ into a transportation plan $\gamma$, defined as a Borel probability measure on the product space $X \times Y$. To be considered a valid coupling, $\gamma$ must satisfy  $\gamma(A\times Y)=\mu(A)$  and $\gamma(X\times B)=\mu(B)$ for all Borel sets $A$ and $B$. The goal is to find a transportation
plan $\gamma$ that attains the infimum cost

\[
W(\mu,\nu)=\inf \bigg\{\int_{X\times Y}c(x,y)d\gamma(x,y) \bigg| \gamma \in \Gamma(\mu,\nu)\bigg\}
\]

where $\Gamma(\mu,\nu)$ denotes the collection of all transportation plans. If $X=Y$, the quantity $W(\mu,\nu)$ is called the Wasserstein
distance between two probability measures $\mu,\nu$ on $X$. Kantorovich proved that the infimum is always achieved by
some transportation plan.

In the context of a finite weighted graph $G = (V, E, w)$, the Kantorovich problem admits a discrete reformulation. Let $d: V \times V \to \mathbb{R}_{\geq 0}$ denote the path metric induced by the edge weights, as defined in Equation (1). A transportation plan $\gamma$ is given by a map $\gamma : V\times V \to [0,1]$ such that $\sum_{i\in V}\gamma_{ij}=\mu_{j}$ and$\sum_{j\in V}\gamma_{ij}=\nu_{i}$for all $i,j \in V$.
The goal is to find the minimum cost
\[
min\bigg\{\sum_{i,j\in V}\gamma_{i,j}d(i,j): \gamma \text{ is a transportation plan}\bigg\}
\]

which is a linear programming problem and solution is computationally tractable using standard optimization algorithms, such as the simplex method or interior-point methods, making Wasserstein distance a robust metric for analyzing distances between distributions on graphs.

\subsection{Curvatures in classical differential geometry}
A foundational pillar of contemporary geometry is the concept of curvature, a quantitative measure of spatial deviation from flatness. Formally conceptualized by Gauss and Riemann in the 19th century, the study of curvature begins with the $n$-dimensional manifold: a topological space that is locally homeomorphic to Euclidean $n$-space. When such a manifold is equipped with a Riemannian metric, which assigns an inner product to each tangent space, it becomes a Riemannian manifold, the primary object of inquiry in the field.

Historically, these concepts emerged from the study of smooth surfaces ($n=2$) embedded in $\mathbb{R}^3$. For a surface $S$, the Gauss map $\mathcal{G}: S \to S^2$ maps each point $p \in S$ to its unit normal vector. The Gaussian curvature $K$ at $p$ is defined as the Jacobian determinant of this map, representing the signed area distortion.

Under this framework, geometric topographies are categorized by their curvature: Euclidean planes exhibit zero curvature ($K = 0$), spheres exhibit constant positive curvature ($K > 0$), and hyperboloids of one sheet exhibit negative curvature ($K < 0$). A pivotal advancement was Gauss's Theorema Egregium, which demonstrated that Gaussian curvature is an intrinsic property. It depends solely on the Riemannian metric and remains invariant regardless of how the surface is isometrically embedded in higher-dimensional space.

Riemann extended these principles to higher dimensions through the introduction of sectional curvature. For a Riemannian manifold $(M, g)$, the sectional curvature $K(P)$ assigns a scalar value to each 2-dimensional plane $P$ within the tangent space $T_pM$. This scalar corresponds to the Gaussian curvature of the surface formed by the image of $P$ under the exponential map.

The sign of the curvature dictates the global behavior and ``density'' of the manifold. Positively curved spaces tend toward ``crowded'' geometries with small diameters, whereas negatively curved spaces exhibit a "spreading" geometry, often characterized by infinite fundamental groups, contractible universal covers, and large-scale branching similar to a tree structure.

The Ricci curvature is a fundamental tensor that assigns a scalar value to each unit tangent vector $v$ at a point $p$, representing the average of the sectional curvatures of all 2-dimensional planes containing $v$. In modern geometry, this curvature is often characterized by its influence on the metric properties of the space. Specifically, Ricci curvature governs the volume growth rate of geodesic balls relative to their radii. Furthermore, it determines the volume of intersection between two overlapping balls as a function of their radii and the displacement between their respective centers.

There have been various approaches to generalize the notion of curvature to spaces which are not manifolds (e.g., a
graph with edge weights). One of the important work of \citet{OLLIVIER2007643} which relates Ricci curvature to optimal transport. Since optimal transport can be formulated on very general metric spaces with probability measure at each point, in particular
on networks with edge weights and probability measures at each vertex, it facilitates the application in community
detection using curvature and optimal transport \cite{Ni2019CommunityDO}.

\subsection{The Ricci flow}
\label{the ricci flow}
Introduced by \citet{ricciflow}, the Ricci flow is a fundamental tool in geometric analysis that deforms the metric of a Riemannian manifold through a process formally analogous to the diffusion of heat. By evolving the metric in a way that smooths out geometric irregularities, the flow serves as a non-linear counterpart to the heat equation. Over the past forty years, it has emerged as one of the most powerful techniques for resolving profound geometric problems, most notably providing the framework for the proof of the Poincare conjecture.

Given a Riemannian manifold $M$ equipped with a metric $g_{ij}$ and its corresponding Ricci curvature tensor $R_{ij}$, Hamilton's Ricci flow is defined as a second-order nonlinear partial differential equation governing the evolution of symmetric $(0, 2)$-tensors:
$$\frac{\partial}{\partial t} g_{ij}(t) = -2 R_{ij}(t)$$

A solution to the Ricci flow is a one-parameter family of metrics $g_{ij}(t)$ on a smooth manifold $M$ satisfying the above partial differential equation. One of the key properties of the Ricci flow is that the curvature evolves according to a nonlinear version of the heat equation. Thus the Ricci flow tends to smooth out irregularity of
the curvature. Under the Ricci flow, regions in the manifold of positive sectional curvature tend to shrink and regions of negative sectional curvature tend to expand and spread out. Singularities usually occur while deforming a Riemannian 3-manifold through the Ricci flow. They appear in a small neighborhood of a surface in the
3-manifold. By removing the singularities (i.e., surfaces) and redefining the Ricci flow on the remaining pieces, one produces the Ricci flow with surgery on the manifold. The ground-breaking work of \citet{perelman2002entropyformularicciflow} shows that the Ricci flow with surgery captures the geometric decomposition of the 3-manifold. It solves the Geometrization Conjecture of Thurston and geometrically classifies all 3-manifolds.

Ricci flow provides a robust framework for analyzing network evolution and community detection. Under this heuristic, a network is treated as a discrete approximation of a high-dimensional manifold (such as a 3-manifold), where distinct network communities correspond to the components of a geometric decomposition. Given that \citet{perelman2002entropyformularicciflow} demonstrated the capacity of Ricci flow to identify the geometric components of 3-manifolds, it follows that a discrete Ricci flow can be leveraged to uncover community structures within networks. Analogous to the Hamilton-Perelman framework, the specific iteration cutoffs and surgery thresholds for the flow must be calibrated to the unique topological properties of each individual network.
\section{Proof of Theorem \ref{ricci-flow-detectation}}
\label{proof of ricci-flow}
We start by computing the Wasserstein distance of a metric on $G(a;b)$. Note that in each community$ C_i$ there is a specific
node $u_i$ which connects to other communities. According to the proof in \cite{Ni2019CommunityDO}, we call this node the gateway node and the rest of the nodes in $C_i$ the
non-gateway nodes. As shown in figure \ref{fig:community type}, there are three types of edges in the graph:

\begin{enumerate}
    \item Edges connecting two communities (on two gateway
nodes, such as $uv$ in figure \ref{fig:community type};
\item Edges connecting a gateway node with a non-gateway node in the same community, such as $ui,uj$ figure \ref{fig:community type};
\item  And edges connecting two non-gateway nodes (such as $ij$).
\end{enumerate}
\begin{figure}[h]
\centering
  \includegraphics[width=0.5\linewidth]{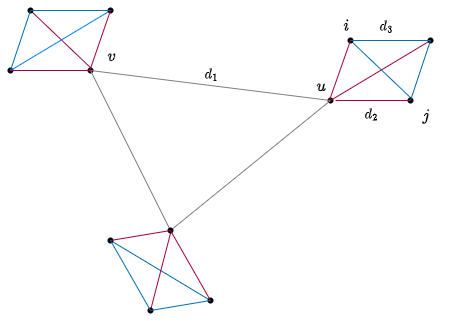}
  \caption{ An example graph $G(a;b)$ obtained from a complete graph on $b+1$ vertices by replacing each vertex by a complete graph of $a+1$
vertices. In this figure, $a = 3, b = 2$. This plot is essentially the same as the one given by \citet{Ni2019CommunityDO} }
  \label{fig:community type}
\end{figure}
\begin{figure}[htp]
\centering
\begin{minipage}{0.32\linewidth}
\centering
\includegraphics[width=\linewidth]{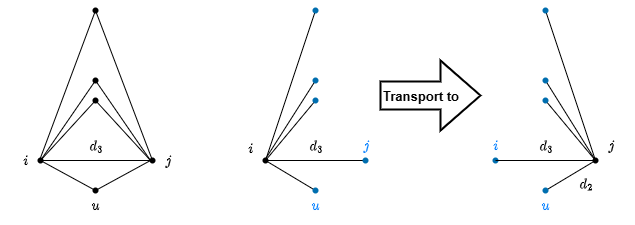}\\[-0.5ex]
\small (a)
\end{minipage}%
\hfill
\begin{minipage}{0.32\linewidth}
\centering
\includegraphics[width=\linewidth]{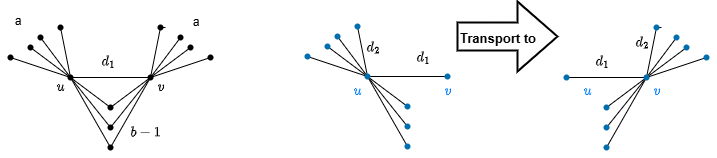}\\[-0.5ex]
\small (b)
\end{minipage}%
\hfill
\begin{minipage}{0.32\linewidth}
\centering
\includegraphics[width=\linewidth]{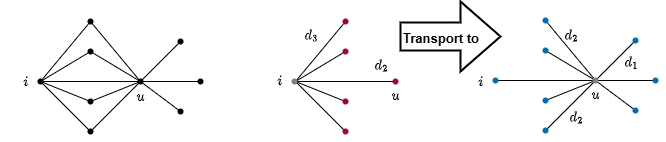}\\[-0.5ex]
\small (c)
\end{minipage}%

\caption{Parts (a), (b) and (c) from \cite{Ni2019CommunityDO} illustrate the optimal transportation to move the mass at vertex $u$ to vertex $v$ under different types of edges. }
\label{types of optimal transport}
\end{figure}

In Section \ref{graph-construction}, the initial metric assigns unit length to each edge of the constructed graph. In addition, iteration \eqref{ricciflow} implies that the Ricci flow preserves the graph symmetry. Note that  there are only three
different edge lengths at each iteration of the Ricci flow, corresponding to the three types of edges.  In addition, we assume the  edge
lengths of the $(n+1)$-th iteration be $W_1$, $W_2$ and $W_3$ which are the Wasserstein distances of the corresponding edges for
the metric graph$ (G(a;b);d)$ with respect to the probability measures $\{\mu_{x}|x\in V\}$.

Suppose the edge lengths of the metric at the $n$-th iteration are $d_{n,1}$, $d_{n,2}$ and $d_{n,3}$ for the edges between communities, edges from a gateway
node to a non-gateway node, and edges between two non-gateway nodes respectively, as shown in figure \ref{fig:community type}. Then we have  $d_{n,1}=w_{uv}=W_{n,1},d_{2}=w_{uj}=W_{n,2},d_{3}=w_{ij}=W_{n,3}$ since there is only one path connecting each pair. This observation is critical in the following derivation. Lemma \ref{lemma} facilitates the asymptotic analysis of edge lengths under normalized discrete Ricci flow in Algorithm \ref{algo:ricci_flow}. For light notation, we ignore the number of iteration subscript ($n,n+1$) for Wasserstein distance and edge length in the proof of Lemma \ref{lemma}. 

\begin{lemma} 
\label{lemma}
According to the notations above and assumptions in Theorem statement, the Wasserstein distances $W_1,W_2,W_3$ are given by 
\begin{itemize}
    \item $W_1=\frac{a-1}{a+b}d_1+\frac{2a}{a+b}d_{2}$
    \item $W_{2}=\frac{b}{a+b}d_{1}+\frac{ab-a-b}{a(a+b)}d_{2}$
    \item $W_{3}=\frac{1}{a}d_{3}$
\end{itemize}

In addition, assume that $a>b\geq 2$ and $d_1\geq d_2\geq d_3$. Then we have $W_1\geq W_2 \geq W_3$.
\end{lemma}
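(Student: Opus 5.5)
The approach is to compute each Wasserstein distance by exhibiting a transport plan, which gives an upper bound, and a matching $1$-Lipschitz Kantorovich potential, which gives a lower bound. The ordering then follows by elementary comparison of coefficients.

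\textbf{Measures.} With $\alpha=0,p=0$, the measure $m_x$ is uniform on the neighbours of $x$ and puts no mass on $x$ itself. A gateway $u$ has $a+b$ neighbours: its $a$ non-gateways and $b$ other gateways. A non-gateway has $a$ neighbours: its gateway and $a-1$ other non-gateways. By the symmetry of $G(a,b)$, and since every edge is the unique shortest path between its endpoints, all distances needed are sums of $d_1,d_2,d_3$ along the obvious routes. The $1$-Lipschitz condition for the path metric only has to be checked on edges, which keeps the dual certificates easy to verify.

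\textbf{Gateway edge $uv$.} The $b-1$ common gateway neighbours carry equal mass in $m_u$ and $m_v$ and stay put. For the upper bound I will use the following plan.
\begin{itemize}
\item Send one non-gateway of $C_u$ to $u$, at cost $d_2$.
\item Send the mass at $v$ to one non-gateway of $C_v$, at cost $d_2$.
\item Send the remaining $a-1$ non-gateways of $C_u$ along $i\to u\to v\to j$, at cost $2d_2+d_1$ each.
\end{itemize}
Dividing the total cost by $a+b$ gives $\frac{a-1}{a+b}d_1+\frac{2a}{a+b}d_2$.

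For the lower bound I will use the potential
\[
f=0 \text{ on } C_u\setminus\{u\},\quad f(u)=d_2,\quad f(v)=d_1+d_2,\quad f=d_1+2d_2 \text{ on } C_v\setminus\{v\},
\]
with $f=d_2+\tfrac{d_1}{2}$ on the other gateways. Elsewhere $f$ is extended so that it is constant on each remaining non-gateway set. This $f$ is $1$-Lipschitz on edges. Evaluating $\sum f\,(m_v-m_u)$ gives $\frac{1}{a+b}\bigl(a(d_1+2d_2)+d_2-(d_1+d_2)\bigr)$, which equals the upper bound.

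\textbf{Non-gateway edge $ij$.} The measures $m_i$ and $m_j$ differ only in that $m_i$ has mass $1/a$ at $j$ and $m_j$ has mass $1/a$ at $i$. Moving that mass costs $d_3/a$. The matching lower bound comes from the potential $f(i)=0$, $f(j)=d_3$, with all other vertices set to $d_3/2$ when they are adjacent to both $i$ and $j$. This gives $W_3=d_3/a$.

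\textbf{Edge $uj$ (main obstacle).} The $b$ gateway neighbours of $u$ must all ship their mass to $u$, contributing $\frac{b}{a+b}d_1$. After this, $u$ holds a surplus $\frac{b}{a+b}-\frac1a=\frac{ab-a-b}{a(a+b)}\ge 0$, because $a>b\ge2$. Each other non-gateway $k$ has a deficit $\frac{b}{a(a+b)}$, while $j$ holds $\frac1{a+b}$ that it does not need. The cheapest way to fill the deficits of the $k$'s from the surplus at $u$ and the excess at $j$ is where the claimed formula must be matched, and I expect this to be the delicate step.
\begin{itemize}
\item The surplus at $u$ travels along edges of length $d_2$, contributing $\frac{ab-a-b}{a(a+b)}d_2$.
\item The excess at $j$ travels along edges of length $d_3$.
\end{itemize}
I would first reconcile this bookkeeping with the stated expression, which has no $d_3$ term, possibly by exploiting cancellation in how the $j\to k$ moves are accounted. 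I would then certify optimality with a potential equal to $0$ at the gateways of other communities, $d_1$ at $u$, and $d_1+d_2$ on the non-gateways of $C_u$, adjusted at $j$. If the exact form fails, the fallback is to prove the bounds $W_2\le$ and $\ge$ needed for the ordering directly.

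\textbf{Ordering.} Using $d_1\ge d_2\ge d_3$:
\begin{itemize}
\item $W_1-W_2=\frac{a-1-b}{a+b}d_1+\frac{2a^2-ab+a+b}{a(a+b)}d_2\ge0$, since $a>b$.
\item $W_2\ge\frac{2ab-a-b}{a(a+b)}d_2\ge\frac{d_2}{a}\ge\frac{d_3}{a}=W_3$. The middle inequality is equivalent to $ab\ge a+b$, which holds for $a\ge3$, $b\ge2$.
\end{itemize}
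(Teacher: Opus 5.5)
\textbf{There is a genuine gap at the edge $uj$: the formula for $W_2$ you are asked to prove is false, so the reconciliation you plan cannot succeed.} Your computations of $W_1$ and $W_3$ are correct, and your $1$-Lipschitz potentials are valid lower-bound certificates. That is more careful than the paper, which only asserts its transport plans are optimal.

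Your bookkeeping for $uj$ is right, and there is no cancellation to exploit. The surplus $\frac{1}{a+b}$ at $j$ must end up at the non-gateways $k\neq j$ of $C_u$, either directly at cost $d_3$ or through $u$. Diverting a unit from $j\to k$ to $j\to u$ forces a unit to continue from $u$ to some $k$, which changes the cost by $2d_2-d_3\geq 0$.

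Completing the potential you sketch makes this exact. Take $f=0$ on the other gateways and their communities, $f(u)=d_1$, $f=d_1+d_2$ on the $k\neq j$, and $f(j)=d_1+d_2-d_3$. This $f$ is $1$-Lipschitz on every edge (on the edge $uj$ it changes by $d_2-d_3\le d_2$), and
\[
\sum_x f(x)\bigl(m_j(x)-m_u(x)\bigr)=\frac{b}{a+b}d_1+\frac{ab-a-b}{a(a+b)}d_2+\frac{1}{a+b}d_3 ,
\]
which equals the cost of your plan. So $W_2$ carries the term $\frac{d_3}{a+b}$, and the displayed formula is false whenever $d_3>0$; no argument can prove it as written. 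The corrected value is exactly what the paper's own proof derives. Its last line is $W_2=\frac{d_3}{a+b}+\frac{ab-a-b}{a(a+b)}d_2+\frac{b}{a+b}d_1$, and the matrix $A$ used afterwards has $\frac{1}{a+b}$ in position $(2,3)$. The statement simply dropped the term.

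Instead of the vague fallback, prove the corrected formula and redo the ordering with it.
\begin{itemize}
\item Your $W_1-W_2$ was computed from the statement's formula, so it acquires an extra $-\frac{d_3}{a+b}$. Bounding $d_3\le d_2$, the $d_2$-coefficient becomes $\frac{2a^2-ab+b}{a(a+b)}>0$. The $d_1$-coefficient $\frac{a-1-b}{a+b}\ge 0$ is unchanged, so $W_1\ge W_2$ still holds.
\item $W_2\ge W_3$ only becomes easier, since the extra term is nonnegative and your chain through $\frac{2ab-a-b}{a(a+b)}d_2\ge\frac{d_2}{a}$ is unaffected.
\end{itemize}
A minor point: your claim that every edge is a shortest path needs $d_3\le 2d_2$. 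This holds under the ordering hypothesis but not for arbitrary lengths, and it is the same inequality that makes the direct $j\to k$ route optimal.
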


\begin{proof}
    We derive the identity of $W_3$ first as it is the most straightforward case shown in Figure \ref{types of optimal transport}(a). Note that nodes $i$ and $j$ share the same vertical vertices. Thus we only need to move mass at node $i$ ($1/a$) to node $j$ along $ij$ to finish the transport, which is optimal by definition. It follows that $W_{3}=\frac{1}{a}d_{3}$.

    Next, we consider the identity of $W_{1}$, which moves the probability measure $\mu_{u}$ to $\mu_{v}$.  By definition and Figure \ref{types of optimal transport}(b), the degrees of nodes $u$ and $v$ are all equal to $a+b$. By the node distribution parameter, both probability measures $\mu_{u}$ and $\mu_{v}$ have density(mass) $\frac{1}{a+b}$ at each vertices connected to $u$ and $v$. Note that $u$ and $v$ share the middle $b-1$ vertices where each of them has mass $\frac{1}{a+b}$ under both measures $\mu_{u}$ and $\mu_{v}$.  For the sake of minimal effort in transportation from $\mu_{u}$ to $\mu_{v}$, there is no need to move them. For the rest of $a$ many vertices $x$ adjacent to $u$ and $a$ many vertices $y$  adjacent to $v$, we consider the following transportation plan, which is the best one:

    \begin{enumerate}
        \item For each vertex $x$ adjacent to $u$, where the mass at $x$ is $\frac{1}{a+b}$, we move it along edge $xu$ from $x$ to $u$. The total cost of moving all of them is $\frac{a}{a+b}d_{2}$.
        \item Leave a mass of $\frac{1}{a+b}$ at $u$ and move the rest of mass to $v$, where the total cost would be $\frac{a-1}{a+b}d_1$.
        \item After the previous two steps, the total mass at vertex $v$ is $\frac{a}{a+b}=\frac{a-1}{a+b}+\frac{1}{a+b}$. We now evenly distribute those mass to the $a$ many vertices $y$ adjacent to $v$, where the total cost would be $\frac{a}{a+b}d_{2}$
    \end{enumerate}

    Therefore, $W_1=\frac{a-1}{a+b}d_1+\frac{2a}{a+b}d_{2}$.

As to the equation for $W_{2}$, notice that the mass of $\mu_{i}$ at a vertex $x$ adjacent to $i$ is $\frac{1}{a}$ and the mass of $\mu_{u}$ at a vertices  adjacent to $u$ is $\frac{1}{a+b}$. In addition, every vertex adjacent to $i$ is also adjacent to $u$. The optimal transportation is as follows:

\begin{enumerate}
    \item Leave the mass $\frac{1}{a+b}$ at each vertex $x\neq u$ adjacent to $i$. Then the total leftover mass at those vertices is $(a-1)(\frac{1}{a}-\frac{1}{a+b})=\frac{(a-1)b}{a(a+b)}$. Move the total mass of $\frac{1}{a+b}$ from the leftover mass at these $x$ to the vertex $i$ of distance $d_3$ from $x$. Note that there are many ways to achieve this. For example, for each of these $(a-1)$ vertices, we move $\frac{1}{(a+b)(a-1)}$ from each of them to $i$, then the total cost would be $\frac{d_3}{a+b}$. This step finishes the mass transportation for $i$.

    \item After  step 1, there is a total mass of $\frac{(a-1)b}{a(a+b)}-\frac{1}{a+b}=\frac{ab-a-b}{a(a+b)}$ at these x. Similarly, we can move the mass to vertex $u$ along edges of length $d_{2}$, generating total cost $\frac{ab-a-b}{a(a+b)}d_{2}$.
    \item Now the mass at vertex $u$ becomes $\frac{1}{a}+\frac{ab-a-b}{a(a+b)}=\frac{b}{a+b}$. We then evenly distribute this mass to vertices $y$ adjacent to$ u$ such that $y$ is not adjacent to $i$. The
total cost is $\frac{bd_{1}}{a+b}$.
\end{enumerate}
    Therefore, $W_2=\frac{d_3}{a+b}+\frac{ab-a-b}{a(a+b)}d_{2}+\frac{b}{a+b}d_{1} $.

To show the inequality $W_1\geq W_2 \geq W_3$, we first have $W_2\geq \frac{b}{a+b}d_{1}\geq \frac{1}{a}d_{1}\geq \frac{1}{a}d_{3}=W_{3}$. On the other hand,

\begin{equation}
    \begin{split}
        W_1-W_2&=\frac{a-1}{a+b}d_1+\frac{2a}{a+b}d_{2}-\bigg(\frac{d_3}{a+b}+\frac{ab-a-b}{a(a+b)}d_{2}+\frac{b}{a+b}d_{1}\bigg)\\
        &=\frac{a-1-b}{a+b}d_{1}+\frac{2a^2-ab+a+b}{a(a+b)}d_2-\frac{1}{a+b}d_3\\
        &=\frac{a(a-1-b)}{a(a+b)}d_{1}+\frac{2a^2-ab+a+b}{a(a+b)}d_2-\frac{a}{a(a+b)}d_3\\
        &\geq \frac{d_2}{a(a+b)}\bigg[ a(a-b-1)+2a^2-ab+a+b-a\bigg]\\
        &=\geq \frac{d_2}{a(a+b)}\bigg[ a(a-b-1)+a^2-ab+a^2+b\bigg]\geq 0 \quad \text{since $a>b\geq 2$ by assumption}
    \end{split}
\end{equation}

\end{proof}

Now we begin the proof of  Theorem \ref{ricci-flow-detectation}. 

\begin{proof}

Consider a $3\times 3$ matrix $A$ such that

\[
A=\begin{bmatrix}
    \frac{a-1}{a+b} & \frac{2a}{a+b} & 0 \\
    \frac{b}{a+b} & \frac{ab-a-b}{a(a+b)} &\frac{1}{a+b} \\
   0 & 0 & \frac{1}{a}
\end{bmatrix}
\]

For the theoretical recurrence below, we use the unit-step normalized update, i.e., the case $\varepsilon=1$ in \eqref{ricciflow}. Since $\kappa=1-W/d$, this sends each edge length to the corresponding Wasserstein distance before normalization. By Lemma \ref{lemma} and the normalization constraint in Algorithm \ref{algo:ricci_flow}, we have the following constrained system of difference equations:

\begin{equation}
\label{diff system}
        \begin{bmatrix}
        w_{n+1,1}\\
        w_{n+1,2}\\
        w_{n+1,3}
    \end{bmatrix}=A    \begin{bmatrix}
        w_{n,1}\\
        w_{n,2}\\
        w_{n,3}
    \end{bmatrix},\quad w_{n,1}+w_{n,2}+w_{n,3}=|E|
\end{equation}

where $|E|$ is the total number of edges for the given graph. We can rewrite the system of difference equations above as $W_{n+1}=AW_{n}$ by denoting $W_n=[w_{n,1},w_{n,2},w_{n,3}]^{T}$. Here $w_{n,i}$ represents the length of the $i$-th edge type in the graph $G(a;b)$ described in Figure \ref{types of optimal transport} after the $n$-th iteration of the Ricci flow. We use the observation that $d_{1}=w_{n,1}$, $d_{2}=w_{n,2}$, and $d_{3}=w_{n,3}$.

We then introduce another lemma given by \cite{Ni2019CommunityDO}
\begin{lemma} 
\label{lemma2}
Suppose $a>b\geq 2$, there are three real eigenvalues satisfying $\lambda_1>\lambda_{2}=\frac{1}{a}\geq 0 > \lambda_{3}$. In addition, one eigenvector $e_{1}$ associated to $\lambda_{1}$ is of the form $[1,k,0]^{T}$ where $0<k<1$.

\end{lemma}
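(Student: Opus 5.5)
The plan is to exploit the block-triangular structure of $A$. The third row of $A$ is $(0,0,\tfrac1a)$, so $\det(A-\lambda I)=(\tfrac1a-\lambda)\,p(\lambda)$. Here $p(\lambda)=\lambda^2-\operatorname{tr}(B)\lambda+\det B$ is the characteristic polynomial of the upper-left $2\times 2$ block
\[
B=\begin{bmatrix} \frac{a-1}{a+b} & \frac{2a}{a+b}\\ \frac{b}{a+b} & \frac{ab-a-b}{a(a+b)}\end{bmatrix}.
\]
Hence $\lambda_2=\tfrac1a$ is an eigenvalue, and it is positive. It remains to show that the two roots of $p$ are real, that one is negative, and that the other exceeds $\tfrac1a$.

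\textbf{Step 1 (real roots of opposite sign).} I compute
\[
\det B=\frac{(a-1)(ab-a-b)-2a^2b}{a(a+b)^2}=\frac{-a^2b-a^2-2ab+a+b}{a(a+b)^2}.
\]
The numerator is negative for $a>b\ge 2$, since $a^2\ge a$ and $a^2b\ge b$ already dominate. A negative product of roots forces two real roots $\lambda_1>0>\lambda_3$.

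\textbf{Step 2 ($\lambda_1>\tfrac1a$).} Since $p$ is an upward parabola with roots $\lambda_3<0<\lambda_1$, it suffices to check $p(\tfrac1a)<0$. That places $\tfrac1a$ strictly between the roots. Concretely, I will clear the denominator $a^2(a+b)^2$ in
\[
p(\tfrac1a)=\tfrac1{a^2}-\tfrac1a\operatorname{tr}B+\det B
\]
and show the resulting polynomial in $a,b$ is negative under $a>b\ge2$. This sign check is the main obstacle, since it is the only genuinely computational step.

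If the check turns out to be awkward, I will use an easier test point instead. For any $2\times2$ matrix, $p(B_{11})=-B_{12}B_{21}$, so here
\[
p\!\left(\tfrac{a-1}{a+b}\right)=-\tfrac{2ab}{(a+b)^2}<0.
\]
This gives $\lambda_1>\tfrac{a-1}{a+b}$. Then I only need $\tfrac{a-1}{a+b}\ge\tfrac1a$, which is equivalent to $a^2-2a-b\ge0$. That holds for $a\ge b+1\ge 3$ except in the single case $(a,b)=(3,2)$. For $(3,2)$, $\tfrac{a-1}{a+b}=\tfrac25>\tfrac13$ directly, so the inequality is strict for all $a>b\ge2$.

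\textbf{Step 3 (eigenvector).} Let $e=[x_1,x_2,x_3]^T$ be an eigenvector for $\lambda_1$. The third row of $Ae=\lambda_1 e$ gives $\tfrac1a x_3=\lambda_1x_3$, and since $\lambda_1\neq\tfrac1a$ this forces $x_3=0$. Normalizing $x_1=1$, the first row gives
\[
k=\frac{(a+b)}{2a}\Bigl(\lambda_1-\tfrac{a-1}{a+b}\Bigr),
\]
and $k>0$ by the bound $\lambda_1>B_{11}$ from Step 2. For $k<1$ I need $\lambda_1<\tfrac{3a-1}{a+b}$. Since this point exceeds $\operatorname{tr}B/2$, it suffices to show $p\bigl(\tfrac{3a-1}{a+b}\bigr)>0$. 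Again $p(B_{11}+B_{12})$ simplifies nicely, to
\[
B_{12}\bigl(B_{12}-B_{21}+B_{11}-B_{22}\bigr),
\]
and I will verify that the bracket is positive using $a>b$.
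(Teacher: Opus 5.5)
Your proof is correct, and it takes a different route from the paper, which gives no proof of Lemma~\ref{lemma2}: it imports the statement from \citet{Ni2019CommunityDO} and uses it as a black box in the proof of Theorem~\ref{ricci-flow-detectation}.

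Your argument is self-contained. It rests on three computations:
\begin{enumerate}[(i)]
\item the factorization $\det(A-\lambda I)=(\tfrac1a-\lambda)p(\lambda)$;
\item the sign of $\det B$;
\item the test points $p(B_{11})=-B_{12}B_{21}<0$ and $p(B_{11}+B_{12})=B_{12}(B_{12}-B_{21}+B_{11}-B_{22})$.
\end{enumerate}
Together these give the two-sided bound $\tfrac{a-1}{a+b}<\lambda_1<\tfrac{3a-1}{a+b}$. The bracket you left unevaluated equals $\frac{3a^2-2ab+b}{a(a+b)}$, which is positive since $a>b$, so Step 3 closes immediately.

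What your route buys over the citation: it checks the claim for the matrix $A$ exactly as written here. That matters because the $(2,3)$ entry $\tfrac{1}{a+b}$ appears in the proof of Lemma~\ref{lemma} but not in its statement. Your block-triangular structure shows that this entry affects neither the spectrum nor the $\lambda_1$-eigenvector, so the lemma survives that inconsistency.

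Three small clean-ups:
\begin{enumerate}[(i)]
\item The case split at $(a,b)=(3,2)$ is unnecessary. There $a^2-2a-b=1>0$, and in general $a^2-2a-b\ge a^2-3a+1>0$ whenever $a\ge b+1\ge 3$.
\item Normalizing $x_1=1$ needs $x_1\neq 0$. This holds because $x_1=0$ would force $B_{12}x_2=0$, hence $e=0$.
\item To get $\lambda_1<\tfrac{3a-1}{a+b}$ from $p>0$ at that point, it suffices that the point is positive and therefore exceeds $\lambda_3<0$. The comparison with $\operatorname{tr}B/2$ is true but not needed.
\end{enumerate}
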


Suppose the eigenvectors associated to $\lambda_{2}$ and $\lambda_{3}$ are $e_{2}$ and $e_{3}$ respectively, it's easy to see $e_{2}=[0,0,1]^{T}$ for $\lambda_{2}=\frac{1}{a}$. On the other hand, all eigenvalues are distinct, it follows that $e_1,e_2,e_3$ are linearly independent in $\mathbb{R}^{3}$. Under the settings in Algorithm \ref{algo:ricci_flow}, the initial condition becomes $w_{0,1}=w_{0,2}=w_{0,3}=1$.

Note that the general solution for system \eqref{diff system} can now be written as

\[
W_{n}=c_1\lambda_{1}^{n}e_{1}+c_2\lambda_{2}^{n}e_{2}+c_3\lambda_{3}^{n}e_{3}=\begin{bmatrix}
    c_1\lambda_{1}^{n}+o(\lambda_{1}^{n})\\
    kc_1\lambda_{1}^{n}+o(\lambda_{1}^{n})\\
    (\frac{1}{a})^{n}\\
\end{bmatrix}
\]

where $c_{1},c_{2},c_{3}$ can be solved by initial condition and constraint condition. We only keep $c_{1}$ to simplify the expression, which will not affect the conclusion. The small o notation $o(\lambda_{1}^{n})$ represents a function $f(n)=o(\lambda_{1}^{n})$ that grows strictly slower than $o(\lambda_{1}^{n})$ as $n$ approaches infinity, i.e. $\lim_{n\to \infty} o(\lambda_{1}^{n})/\lambda_{1}^{n}=0$.

As a conclusion, we see that the length of 
edge type $uv$ ($w_{n,1}$) grows at the rate of $\lambda_{1}^{n}$ , the length of edge type $ui$ ($w_{n,2}$) and $ij$ ($w_{n,3}$) grows at rate $o(\lambda_{1}^{n})$. More specifically, the length of edge type $ij$ ($w_{n,3}$) shrinks at the rate of $\frac{1}{a}<1$ and shrinks to zero exponentially fast. Namely, the weight of the intra-community edges shrink asymptotically faster than the weight of the inter-community edges.

\end{proof}

\section{Finite-time extension for \texorpdfstring{$\alpha=1/2,p=2$}{alpha=1/2, p=2}}
\label{proof-alpha-half-p-two}

The proof of Theorem \ref{ricci-flow-detectation} above treats the uniform-neighborhood case $\alpha=0,p=0$, where the Wasserstein update reduces to a linear recurrence on the three edge types. The parameter choice used in our experiments, $\alpha=1/2,p=2$, is different: the probability measure at a node depends on the current edge lengths through the exponential term in \eqref{mass dist}. Therefore the same eigenvalue argument does not directly apply. We prove Proposition \ref{prop:practical-separation} by using direct estimates on the nonlinear Wasserstein update.

\begin{proof}[Proof of Proposition \ref{prop:practical-separation}]
Under the ordering $d_1\geq d_2\geq d_3>0$, the direct edge realizes the shortest path between the endpoints of each edge type. Thus the neighbor distances entering \eqref{mass dist} are exactly the corresponding edge-type lengths $d_1,d_2,d_3$.
Write
\[
    E_\ell=\exp(-d_\ell^2),\qquad
    C_g=aE_2+bE_1,\qquad
    C_n=E_2+(a-1)E_3 .
\]
For a gateway node $u$, the mass distribution has mass $1/2$ at $u$, mass $E_2/(2C_g)$ at each of the $a$ non-gateway neighbors in the same community, and mass $E_1/(2C_g)$ at each of the $b$ gateway neighbors in other communities. For a non-gateway node $i$, the distribution has mass $1/2$ at $i$, mass $E_2/(2C_n)$ at its gateway node, and mass $E_3/(2C_n)$ at each of the other $a-1$ non-gateway nodes in the same community.

By the same symmetry and shortest-path transport argument used in Lemma \ref{lemma}, the three Wasserstein distances are
\begin{align}
    W_1
    &=
    \frac{2aE_2+(b-1)E_1}{2C_g}d_1
    +
    \frac{aE_2}{C_g}d_2, \label{eq:w1-alpha-half}\\
    W_2
    &=
    \frac{(a-1)E_3}{2C_n}d_2
    +
    \frac{bE_1}{2C_g}(d_1+d_2), \label{eq:w2-alpha-half}\\
    W_3
    &=
    \frac{E_2+(a-2)E_3}{2C_n}d_3. \label{eq:w3-alpha-half}
\end{align}
For example, in the type-3 case the common mass at the gateway and at the other shared non-gateway neighbors cancels, and only the excess mass at one endpoint must be transported across the edge to the other endpoint. The type-1 and type-2 formulas follow by canceling common mass and then moving the remaining surplus along shortest paths; each displayed plan attains the matching lower bound obtained by counting the mass that must cross the corresponding gateway and within-community edge classes.

Let $X=E_1/E_2$ and $Y=E_3/E_2$. Since $d_1\geq d_2\geq d_3$, we have $0<X\leq 1$ and $Y\geq 1$. From \eqref{eq:w2-alpha-half} and \eqref{eq:w3-alpha-half},
\[
    W_2-W_3
    =
    \frac{(a-1)Yd_2-\bigl(1+(a-2)Y\bigr)d_3}{2(1+(a-1)Y)}
    +
    \frac{bX}{2(a+bX)}(d_1+d_2).
\]
The first term is nonnegative because $d_2\geq d_3$ and $Y\geq 1$, and the second term is strictly positive. Thus $W_2>W_3$.

Similarly, \eqref{eq:w1-alpha-half} and \eqref{eq:w2-alpha-half} give
\[
    W_1-W_2
    =
    \frac{2a-X}{2(a+bX)}d_1
    +
    \left[
    \frac{2a-bX}{2(a+bX)}
    -
    \frac{(a-1)Y}{2(1+(a-1)Y)}
    \right]d_2 .
\]
If the bracketed coefficient is negative, the assumption $d_1\geq d_2$ yields the lower bound
\[
    W_1-W_2
    \geq
    \left[
    \frac{4a-(b+1)X}{2(a+bX)}
    -
    \frac{(a-1)Y}{2(1+(a-1)Y)}
    \right]d_2 .
\]
The first fraction is minimized at $X=1$, while the second fraction is strictly smaller than $1/2$. Hence
\[
    W_1-W_2
    >
    \left[
    \frac{4a-b-1}{2(a+b)}-\frac{1}{2}
    \right]d_2
    =
    \frac{3a-2b-1}{2(a+b)}d_2
    >0,
\]
where the last inequality follows from $a>b\geq 2$. Therefore $W_1>W_2$.

Finally, \eqref{eq:w3-alpha-half} implies
\[
    W_3
    \leq
    \frac{a-1}{2a}d_3,
\]
because the coefficient of $d_3$ is maximized when $Y=1$. Also \eqref{eq:w1-alpha-half} implies
\[
    W_1
    \geq
    \frac{2a+b-1}{2(a+b)}d_1,
\]
because the coefficient of $d_1$ is minimized when $X=1$. Combining the two estimates gives
\[
    \frac{W_3}{W_1}
    \leq
    \frac{(a-1)(a+b)}{a(2a+b-1)}
    \frac{d_3}{d_1}
    =
    q_{a,b}\frac{d_3}{d_1}.
\]
Since $a>b\geq 2$, $q_{a,b}<1$. The normalized Ricci-flow step multiplies all updated edge lengths by the same positive normalization factor, so it preserves both ordering and ratios. Starting from $d_1=d_2=d_3=1$, induction gives the claimed ordering preservation and geometric decay of $w_{n,3}/w_{n,1}$.
\end{proof}

\begin{remark}
Proposition \ref{prop:practical-separation} is intentionally weaker than Theorem \ref{ricci-flow-detectation}. It handles the practical lazy distribution $\alpha=1/2,p=2$, but only proves ordering preservation and relative contraction of the non-gateway intra-community edge under the symmetric $G(a,b)$ model and the unit-step normalized update. It should not be read as a full convergence theorem for arbitrary embedding-derived graphs or for every finite-step implementation detail of Algorithm \ref{algo:ricci_flow}.
\end{remark}

\newpage

\section{Ablation study for MuSiQue}
\label{MuSiQue}

\begin{table*}[h]
  \caption{Ablation studies of Ricci-Filtration on the Accuracy (\%) of MuSiQue dataset}
  \label{ablation-study-musique}
  \begin{center}
  \begin{small}
      \begin{sc}
        \begin{tabular}{l|c|c|c|c|c|c|c|c}
          \toprule
          Methods &$k$         &  $n^{*}$  &$\tau$ &$p$ & $M$ &Embedding model&Reranker model&Acc \\
          \midrule
          Cross-Encoder     &20 & 5 & -& -& -& text-embedding-3-small & bge-reranker-base &\textbf{29.30}\\
          \hline
          Cross-Encoder    & 20 & 10& -& -& -& text-embedding-3-small & bge-reranker-base & \textbf{30.50}\\
          \hline
                    Cross-Encoder     &20 & 5 & -& -& -& text-embedding-ada-002 & bge-reranker-base &\underline{\textbf{30.20}}\\
          \hline
          Cross-Encoder    & 20 & 5& -& -& -& text-embedding-3-small & 
ms-marco-MiniLM-L6-v2 &\underline{\textbf{28.90}} \\
          
          \hline
          Ricci-Filtration    & 10& 6.5& 50\%& 2& 20& text-embedding-3-small & bge-reranker-base &28.10\\
                    \hline
          Ricci-Filtration   & 20& 15.7& 50\%& 2& 20& text-embedding-3-small & bge-reranker-base &28.80\\
          \hline
          Ricci-Filtration    & 20& 15.3& 50\%& 5& 20& text-embedding-3-small & bge-reranker-base &28.50\\
                    \hline
          Ricci-Filtration    & 20& 15.3& 50\%& 1& 20& text-embedding-3-small & bge-reranker-base &27.10\\
          \hline
            Ricci-Filtration   & 20& 16.6& 25\%& 2& 20& text-embedding-3-small & bge-reranker-base &\textbf{29.20}\\
          \hline
          Ricci-Filtration    & 20& 14.4& 75\%& 2& 20& text-embedding-3-small & bge-reranker-base &26.20\\        
                    \hline
            Ricci-Filtration   & 20& 14.6& 50\%& 2& 10& text-embedding-3-small & bge-reranker-base &28.30\\
          \hline
          Ricci-Filtration    & 20& 10.2& 50\%& 2& 1& text-embedding-3-small & bge-reranker-base &23.20\\  
          \hline
                      Ricci-Filtration   & 20& 16.1& 50\%& 2& 20& text-embedding-ada-002 & bge-reranker-base &\underline{26.40}\\
          \hline
          Ricci-Filtration    & 20& 16.4&50\%& 2& 20& text-embedding-3-small & ms-marco-MiniLM-L6-v2 &\underline{28.60}\\ 
          \hline
        \end{tabular}
      \end{sc}
      \end{small}
  \end{center}
  {\scriptsize\noindent\emph{Note.} The better performing values are highlighted in bold. The results from different reranker models are highlighted with underline. $n$ for Ricci-Filtration methods represents the average number of chunks fed into reranker model after Ricci-Filtration.\par}
\end{table*}

\section{K-means filtering ablation}
\label{kmeans-ablation}

To separate the effect of Ricci-flow-based graph filtration from a simpler clustering heuristic, we compare Ricci-Filtration with a K-means filtering baseline. The K-means baseline clusters the query and initially retrieved chunk embeddings into two clusters ($k=2$), keeps the cluster containing the query node, and then follows the same downstream RAG procedure. For Ricci-Filtration, we use $\alpha=0.5$ and $p=2$; both methods use the same initial retrieval setting and gpt-4o-mini generation setting.

\begin{table*}[h]
  \caption{Accuracy (\%) comparison between Ricci-Filtration and K-means filtering with $k=2$.}
  \label{kmeans-filtering-table}
  \begin{center}
    \begin{small}
      \resizebox{\linewidth}{!}{
      \begin{tabular}{lcccccc}
        \toprule
        Methods & SQuADv2 & HotpotQA & MultiHop Inference & MultiHop Comparison & MultiHop Temporal & MultiHop Null \\
        \midrule
        Ricci-Filtration & \textbf{73.00} & 67.00 & \textbf{97.18} & 64.02 & \textbf{56.60} & \textbf{95.00} \\
        K-means filtering ($k=2$) & 63.70 & \textbf{67.70} & 88.48 & \textbf{67.99} & 50.77 & 94.02 \\
        \bottomrule
      \end{tabular}
      }
    \end{small}
  \end{center}
\end{table*}

The comparison is mixed but informative. Ricci-Filtration improves four of the six reported accuracy metrics, including SQuADv2 and three MultiHop-RAG query types. K-means filtering is slightly better on HotpotQA and MultiHop comparison queries. Thus, the result supports the usefulness of the graph-geometric filtration step over a simple Euclidean clustering baseline in several settings, while also reinforcing that Ricci-Filtration is not uniformly superior on harder multi-hop cases.

\section{LLM-based reranker prompt}
\label{llm-reranker-prompt}

For the LLM-based reranker baseline, each query is paired with the initially retrieved passages, formatted as \texttt{Document i: ...}. The model is instructed to return a JSON dictionary mapping each passage id to a relevance score from 0 to 10. The system prompt and user-message template are shown below.

\paragraph{System prompt.}
\begingroup
\footnotesize
\begin{verbatim}
You are a customer support answer service. Your task is to evaluate help
center passages and score their relevance to a given customer query for a
retrieval augmented generation (RAG) system.

Evaluation Process:
1. Analyze the customer's query to identify both explicit needs and implicit
   context, including underlying user goals.
2. Assess each passage's ability to directly resolve the query or provide
   substantive supporting information with actionable guidance.
3. Score based on how effectively the passage addresses the query's core
   intent while considering potential interpretations.

Grading Criteria:
<grading_scale>
10: EXCEPTIONAL match - Contains exact step-by-step instructions that
    perfectly match the query's specific scenario. Must include all required
    parameters/context and resolve the issue completely without any ambiguity.
    Reserved for definitive solutions that exactly mirror the user's
    described situation and require no interpretation.

9: NEAR-PERFECT solution - Contains all critical steps for resolution but may
   lack one minor non-essential detail. Addresses the precise query parameters
   with specialized information. Solution must be directly applicable without
   requiring adaptation or assumptions.

8: STRONG MATCH - Provides complete technical resolution through specific
   instructions, but may require simple logical inferences for full
   application. Covers all essential components but might need minor
   contextualization.

7: GOOD MATCH - Contains substantial relevant details that address core
   aspects of the query, but lacks one important element for complete
   resolution. Provides concrete guidance requiring some user interpretation.

6: PARTIAL match - General guidance on the right topic but lacks the specifics
   for direct application. May only resolve a subset of the request.

5: LIMITED relevance - Related context or approach, but indirect. Requires
   substantial effort to adapt to the user's exact need.

4: TANGENTIAL - Mentions related concepts/keywords with little practical
   connection to the request. Minimal actionable value.

3: VAGUE domain info - Talks about the general area but not the query's
   specifics. No concrete, actionable steps.

2: TOKEN overlap - Shares isolated terms without context or intent aligned to
   the request. Similarity is coincidental.

1: IRRELEVANT - Uses query terms in a completely unrelated way. No meaningful
   link to the user's goal.

0: UNRELATED - No thematic or contextual connection to the query at all.
</grading_scale>
\end{verbatim}
\endgroup

\paragraph{User message template.}
\begingroup
\footnotesize
\begin{verbatim}
Question:
{query}

Context:
Document 0: {passage_0}

Document 1: {passage_1}

...

Output Format:
<output_format>
Return your response in a valid JSON (skip spaces):
{"id0":score0,"id1":score1,...}

Strict guidelines:
- Return ONLY a well-formed valid JSON with passage IDs as keys
- Each key must be a passage id in the format "idN"
- Integer values only, no decimals
- Skip spaces in the JSON
- No additional text or formatting
- Maintain original passage ID order
</output_format>

Reranked documents:
\end{verbatim}
\endgroup

\section{Time efficiency comparison}
\label{time efficiency tab}
\begin{table}[!htp]
  \caption{Average and standard deviation running time (in seconds) for each QA task under different methods.  }
  \label{time-table}
  \begin{center}
    \begin{small}
      \begin{sc}
        \begin{tabular}{lccc}
          \toprule
          Methods & SQUADv2         & HotpotQA   &  MuSiQue   \\
          \midrule
          Cross-Encoder Reranker     & 1.08$\pm$ 0.22 &  1.37 $\pm$ 0.30 & 4.18 $\pm$ 0.88\\
          LLM as Reranker  & 3.75$\pm$ 0.22 &4.82$\pm$ 0.95 & 4.25 $\pm$ 1.14\\
          Ricci-Filtration    & 8.62$\pm$ 0.32 &9.41 $\pm$ 0.47& 12.37 $\pm$ 1.21 \\
          \bottomrule
        \end{tabular}
      \end{sc}
    \end{small}
  \end{center}

\end{table}

The average and standard deviation running time is calculated based on 10 randomly pick questions 

\end{document}